\DeclareMathOperator*{\argmax}{arg\,max}
\newcommand{\tabincell}[2]{\begin{tabular}{@{}#1@{}}#2\end{tabular}} 
\newtheorem{thm}{Theorem}
\definecolor{mydarkblue}{rgb}{0,0.08,0.45}
\definecolor{darkgreen}{rgb}{0.0, 0.5, 0.0}
\begin{document}
\pagestyle{headings}
\mainmatter
\def\ECCVSubNumber{2148}  

\title{Novel Human-Object Interaction Detection via \\ Adversarial Domain Generalization} 

\titlerunning{Novel HOI Detection via Adversarial Domain Generalization}
%
\author{
Yuhang Song\textsuperscript{$1$}
\thanks{Work done while interning at Microsoft Research AI lab.}
\and Wenbo Li\textsuperscript{$2$}
\and Lei Zhang\textsuperscript{$3$} 
\and Jianwei Yang\textsuperscript{$3$} 
\and Emre Kiciman\textsuperscript{$3$} 
\and \\
 Hamid Palangi\textsuperscript{$3$} 
\and Jianfeng Gao\textsuperscript{$3$}
\and C.-C. Jay Kuo\textsuperscript{$1$} 
\and Pengchuan Zhang\textsuperscript{$3$}
}
\authorrunning{Y. Song, W. Li, L. Zhang et al.}
%
\institute{
 \textsuperscript{$1$}University of Southern California
\hspace{10mm}
 \textsuperscript{$2$}Samsung Research America AI Center
 \\
 \textsuperscript{$3$}Microsoft Corporation
}
\maketitle

\begin{abstract}
\vspace{-2mm}
We study in this paper the problem of novel human-object interaction (HOI) detection, aiming at improving the generalization ability of the model to unseen scenarios. The challenge mainly stems from the large compositional space of objects and predicates, which leads to the lack of sufficient training data for all the object-predicate combinations. As a result, most existing HOI methods heavily rely on object priors and can hardly generalize to unseen combinations. To tackle this problem, we propose a unified framework of adversarial domain generalization to learn object-invariant features for predicate prediction. To measure the performance improvement, we create a new split of the HICO-DET dataset, where the HOIs in the test set are all unseen triplet categories in the training set. Our experiments show that the proposed framework significantly increases the performance by up to 50\% on the new split of HICO-DET dataset and up to 125\% on the UnRel dataset for auxiliary evaluation in detecting novel HOIs.

\end{abstract}
\section{Introduction}
\noindent Over the past few years, rapid progress has been made in visual recognition tasks, but image understanding also calls for visual relationship detection, i.e., detection of <subject, predicate, object> triplets in the image. While some efforts have been made to detect the general relationships between different objects~\cite{lu2016visual,liang2017deep,dai2017detecting,xu2017scene,zhang2017visual,liang2018visual,zellers2018neural,yang2018graph}, one particularly important class of visual relationship detection requiring further study is the Human-Object Interaction (HOI) detection, where only relations with human subjects are of interest~\cite{chao2015hico,gupta2015visual,chao2018learning,gao2018ican,li2018transferable,qi2018learning,fgkioxari2018detecting}. 

%
\begin{figure}[t]
\centering
\small
\setlength{\tabcolsep}{1pt}
\begin{tabular}{ccc}
 \includegraphics[width=.33\linewidth]{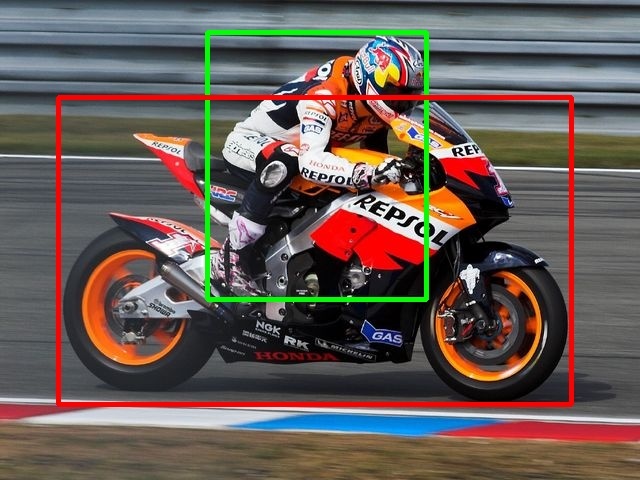}&
 \includegraphics[width=.33\linewidth]{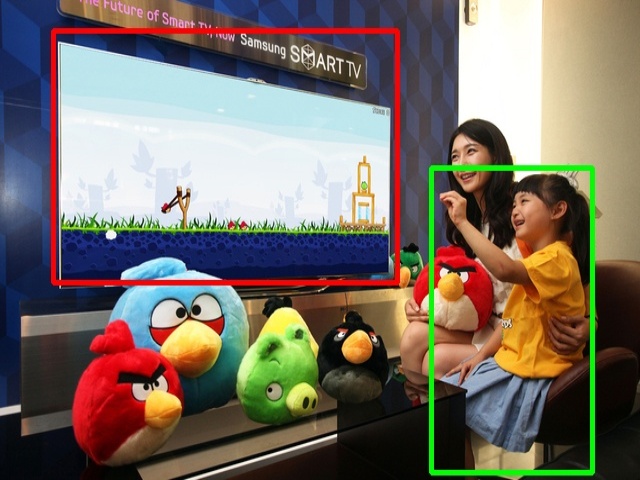} & 
 \includegraphics[width=.33\linewidth]{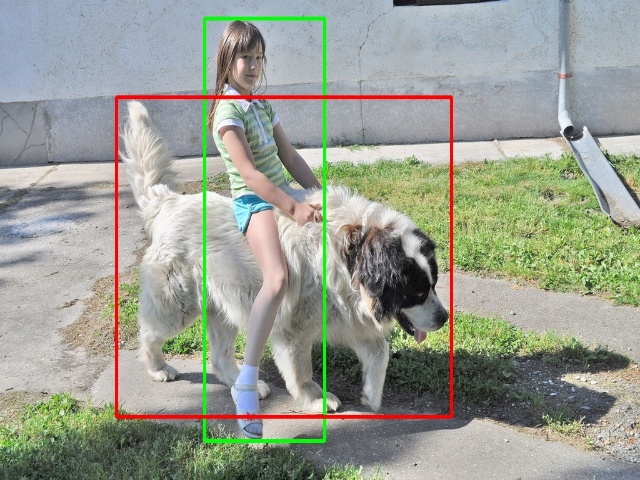} \\
 Annotation: \textcolor{darkgreen}{ride (motorcycle)} &
 Annotation: \textcolor{darkgreen}{watch (tv)} &
 Prediction: \textcolor{darkgreen}{ride (dog)} \\
\end{tabular}
\vspace{-3mm}
\caption{Novel relationship detection. \textbf{Green box}: subject. \textbf{Red box}: object. First two images are fro training set while the last image contains an unseen triplet from test set.}
\vspace{-5pt}
\label{fig:teaser_novelrelationdetection}
\end{figure} 



A long-standing problem in both HOI detection and visual relationship detection is the long-tail problem, where specific predicates dominate the triplet instances for most of the object categories. Fig.~\ref{fig:hico_longtail} shows the distribution of both the triplet categories and the object categories given the predicate ``horse'' in the HICO-DET dataset. In both cases, a small number of categories dominate the training instances, allowing a learned model to rely on a frequency prior rather than learning the relationship feature itself. The same conclusion is made in Visual Genome dataset~\cite{krishna2017visual} by Zellers \emph{et al.}~\cite{zellers2018neural}, where they point out that the frequency prior is a main barrier for visual relationship detection. 

Collecting a balanced dataset is a simple approach to tackle this problem.  However, if we have $N$ predicates and $M$ objects, the possible combination of the triplets is $MN$.  It is difficult to collect all those possible combinations due to the infrequency of relationships. For the relationship detection task, the long-tail problem, combinatorial problem and frequency prior barrier are closely related in the sense that the distribution of the triplet categories are extremely imbalanced in the large compositional space of triplet categories.

Motivated by these observations, we focus on the {\bf novel} HOI detection problem~\cite{shen2018scaling}, where the predicate-object combinations in test set are never seen in the train set. As shown in Fig.~\ref{fig:teaser_novelrelationdetection}, the training and test set share the same predicate categories, but the combinations of predicate and object categories in the test set are unseen. This task is challenging because the model is required to learn object-invariant predicate features and generalize to unseen interactions, which are able to be further applied to downstream tasks. 

\begin{figure}[!ht]
\vspace{-20pt}
	\centering
	\includegraphics[width=1.\linewidth]{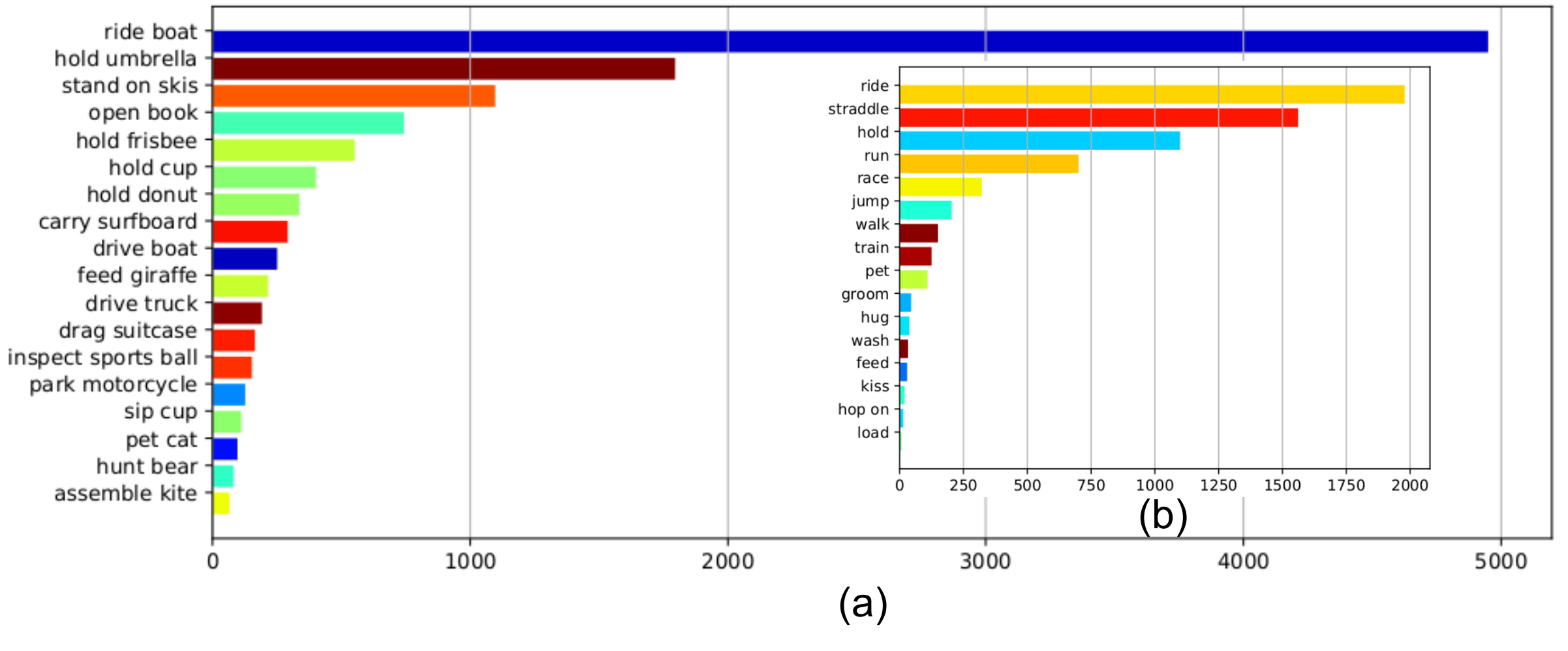}
	\vspace{-10pt}
	\caption{Number of instances in the HICO-DET dataset for each (a) HOI category (b) predicate category of "horse".}
	\vspace{-5pt}
\label{fig:hico_longtail}
\end{figure}


Our first contribution is to create a new benchmark dataset for the novel HOI detection task, based on the images and annotations from the HICO-DET dataset~\cite{chao2018learning} and the UnRel dataset~\cite{peyre2017weakly}. The new benchmark dataset avoids the overlapping of the triplet categories in the training set, validation set and test set. This new benchmark contains an additional evaluation set from UnRel dataset~\cite{peyre2017weakly},  highlighting its instances with unusual scenes.

Our second contribution is to propose a unified adversarial domain generalization framework, which can serve as a plug-in module for existing models to improve their generalization ability. We instantiate both conditional and unconditional methods within the framework and build its relationship with previous methods. Experiments on HICO-DET and Unrel dataset show that our proposed adversarial training can get uniformly significant improvement on all metrics. Our work shows promising results of adversarial domain generalization in conquering the combinatorial prediction problem in real-world applications.

\section{Related Work}

\noindent\textbf{Visual relationship detection and human-object interaction detection}
Visual relationship detection~\cite{sadeghi2011recognition,krishna2017visual} has seen a surge of interests recently~\cite{lu2016visual,dai2017detecting,xu2017scene,li2017scene,yang2018graph,zhang2019graphical,tang2020unbiased}, due to the great success of deep learning on other 2D vision tasks. However, the current SOTA performance~\cite{tang2020unbiased} on the Visual Genome dataset~\cite{krishna2017visual} is still marginally better than the frequency prior baseline~\cite{zellers2018neural}, due to the serious long-tail problem in visual relationship detection. Detecting HOI~\cite{gupta2015visual,chao2015hico,chao2018learning} requires a deeper understanding of the scenario in images and presents unique challenges for visual relationship detection, as human actions mostly relate to semantic verbs, which are more challenging.
\cite{fgkioxari2018detecting,wan2019pose,zhou2019relation,gupta2019no} exploited the cue of human appearance by making use of detected human key points. 
\cite{gao2018ican,wang2019deep} employed novel attention modules to exploit the contextual information. \cite{qi2018learning} designed a graph parsing network to incorporate structural knowledge. Li \emph{et al.}~\cite{li2019transferable} applied interactiveness prior to boost the performance, which is learned across multiple datasets.

\noindent\textbf{Long-tail or novel human-object interaction detection}
\cite{shen2018scaling} formulates the novel HOI detection as a zero-shot learning problem, and propose to detect the predicates and objects separately. \cite{shen2018scaling} focuses on HOI detection and train the predicate and object detector jointly. Our paper focuses on predicate classification and assumes given ground-truth/detected objects. Different from \cite{shen2018scaling}, we introduce the adversarial training to learn object-independent features for predicate classification, so that the model can robustly generalize to unseen triplet combinations. There are other approaches towards long-tail or novel HOI detection, such as \cite{xu2019learning} making use of external knowledge and \cite{peyre2018detecting} obtaining visual-phrase embeddings of unseen relations from transfer by seen analogies.


\noindent\textbf{Adversarial domain generalization (ADG)} 
Inspired by generative adversarial networks\cite{goodfellow2014generative}, adversarial domain adaption methods, e.g., \cite{ganin2014unsupervised,tzeng2015simultaneous,tzeng2017adversarial}, have been successfully embedded into deep networks to learn transferable features to reduce distribution discrepancy between the source and target domains. In contrast, domain generalization (DG)\cite{khosla2012undoing,ghifary2016scatter,muandet2013domain,xu2014exploiting,ghifary2015domain} aims to learn a model from (multiple) source domains and generalize it to unseen target domains, and thus does not require unlabeled data from the target domains. Recently, \cite{li2018deep} propose a conditional invariant deep domain generalization method to learn a domain-invariant representation by making the learned representations on different domains indistinguishable. Our work generalizes \cite{li2018deep} with generalized discrepancy measures and simpler solutions for real-world large-scale training. As a result, our method can deal with real-world challenging DG problems with large number of source domains and huge variations of label distributions across source domains.
\section{Problem Statement}
\subsection{Problem Formulation}\label{sec:problemformulation}
\noindent Suppose the training set and test set are represented by $\mathcal{D}_{train} = \{(I_i, (b_S)_i, (b_O)_i, \\S_i, O_i, P_i)\}$ and $\mathcal{D}_{test} = \{(I_j, (b_S)_j, (b_O)_j, S_j, O_j, P_j)\}$, where $b_S$ and $b_O$ are the bounding boxes of subjects and objects, and $I$, $S$, $O$, $P$ denote the images, subject labels, object labels and predicate labels. The novel HOI detection task is defined by the constraint that there are no overlapping combinations of $P$ and $O$. The goal of HOI detection is to learn a function $\mathcal{F}: I \rightarrow \{b_S, b_O, O, P\}$.
This contains two steps, object detection and predicate detection, making the problem more complex and difficult to analyze. In this paper, we focus on predicate prediction problem to learn the object-invariant features, where we formulate it as: $\mathcal{F}: \{I, b_S, b_O\} \rightarrow P$.


\subsection{Dataset Creation}\label{sec:datasetcreation}
\noindent The most commonly used datasets for HOI detection are V-COCO~\cite{gupta2015visual} and HICO-DET~\cite{chao2018learning}. As V-COCO is relatively small,
it is insufficient for evaluation of novel HOI detection. Therefore, we primarily use the HICO-DET dataset for our experiments and evaluation, with 600 HOI categories and over 150K annotated instances of human-object pairs. We extract 117 predicates and 80 object categories from the 600 HOI categories to evaluate the HOI detection performance of the model on unseen <human, predicate, object> triplets.


However, the original split of the HICO-DET was not designed to verify the effectiveness of the proposed models when being transferred to unseen predicate-object pairs. With the original train/test split, \cite{shen2018scaling} use part of predicate-object combinations in the train set and the other part of predicate-object combinations in the test set to set up the novel HOI detection task. However, this approach discards most of the data in the original test split and results in a very small novel-HOI test set and thus large fluctuation of evaluation metrics. Moreover, it is not clear how the validation set is set up and how the hyperparameters is tuned in \cite{shen2018scaling}.
Therefore, we create a new split of the HICO-DET based on its images and annotations, based on the principle that none of the <human, predicate, object> triplet categories in the test set should exist in the training set. 
We collect the triplet instances in the whole dataset and then divide them into 90\% {\it training} and 10\% {\it test} sets without overlapping triplet categories. \footnote{We also ensure the triplet instances in the training set and the test set are from different image sets.}
We further divide the training set into 7/9, 1/9 and 1/9 for {\it training} and {\it trainval} and {\it testval} splits, where the {\it training} and {\it trainval} splits share the same triplet categories and the {\it testval} split has no overlapping triplet categories with training and {\it trainval}. The combined {\it trainval} and {\it testval} splits are used as the validation set for hyperparameter tuning. 
The statistics of this new split are listed in Table~\ref{tb:newsplitstatistics}. 


\begin{table}[t]
\footnotesize
\setlength{\tabcolsep}{2.5pt} 
\center
\resizebox{.8\linewidth}{!}{
  \begin{tabular}{l c c c c c c c c }
   \toprule
         & \multicolumn{5}{c}{HICO-DET}     
         & \multicolumn{3}{c}{UnRel}     \\
    \cmidrule(r){2-6}
    \cmidrule(r){7-9}
  & train & trainval & testval & test & total & split 1 & split 2 & split 3   \\
    \cmidrule(r){1-9}
   \multirow{2}{*}{}
 \#images  & 31873 & 4357 & 5528 & 5421 & 47179 & 196 & 248 & 494  \\
 \#instances  & 106043 & 15149 & 10408 & 10391 & 141991 & 323 & 396 & 718  \\
\bottomrule
\end{tabular}
}
\vspace{2mm}
\caption{ Statistics of the new splits}
\label{tb:newsplitstatistics}
\vspace{-5mm}
\end{table}

In addition to the HICO-DET dataset, we use the UnRel dataset~\cite{peyre2017weakly} for auxiliary validation for models trained on HICO-DET dataset. The UnRel dataset was originally used for image retrieval and was designed for the detection of rare relations. It contains 1,071 images with annotations and 76 triplet categories. We choose the instances with the same predicate categories as those in the HICO-DET to validate the generalization ability of the model. Based on the type of the chosen predicate and object types, we created three different splits in the UnRel dataset to verify the generalization performance in different levels: \\
\textbf{Split 1:} $\mathcal{D}_{unrel}^{1} = \{(I_{k_1}, (b_S)_{k_1}, (b_O)_{k_1}, S_{k_1}, O_{k_1}, P_{k_1})\}$, $S_{k_1}=human$, $\{O_{k_1}\} \subset \mathcal{D}_{train}$, $\{P_{k_1}\} \subset \mathcal{D}_{train}$. \\ 
\textbf{Split 2:} $\mathcal{D}_{unrel}^{2} = \{(I_{k_2}, (b_S)_{k_2}, (b_O)_{k_2}, S_{k_2}, O_{k_2}, P_{k_2})\}$, $S_{k_2}=human$, $\{P_{k_2}\} \subset \mathcal{D}_{train}$. \\
\textbf{Split 3:} $\mathcal{D}_{unrel}^{3} = \{(I_{k_3}, (b_S)_{k_3}, (b_O)_{k_3}, S_{k_3}, O_{k_3}, P_{k_3})\}$, $\{P_{k_3}\} \subset \mathcal{D}_{train}$. \\
The statistics of the splits are shown in Table~\ref{tb:newsplitstatistics}.

\subsection{Evaluation Metrics}\label{sec:evaluationmetrics}
In this paper, we focus on the predicate detection and use ground-truth object boxes and labels for evaluation. Due to the ambiguity and incompleteness of HOI annotations (e.g., ``ride'' vs ``striddle''), we propose to use recall metrics (which are widely used in visual relationship detection and scene graph generation~\cite{lu2016visual,zhang2017visual,liang2018visual,zellers2018neural,zhang2019graphical,tang2020unbiased}) as follows:  \\
(1) \noindent\textbf{Predicate classification} (PredCls): 
For each human-predict-object triplet in the test set, predict the predicate class given the ground-truth bounding boxes and object label. \\
(2) \noindent\textbf{Predicate detection} (PredDet): 
For each image in the test set, detect all human-predict-object triplets given the ground-truth bounding boxes and their associated labels. \\ 
We can replace the ground-truth object bounding boxes and labels with detected bounding boxes and labels (from a pretrained or jointly trained object detector), and the PredDet metric becomes the standard SgDet metric in scene graph generation~\cite{lu2016visual,zhang2017visual,liang2018visual,zellers2018neural,zhang2019graphical,tang2020unbiased}. However, the SgDet metric is very sensitive to the object detection performance. Therefore, our metrics use ground-truth object boxes and labels to exclude the factor of object detection performance.

\section{Method}
\noindent In this section, we start with an overview of our baseline in Section~\ref{sec:methodoverview}. Then we present our adversarial domain generalization framework in Section~\ref{sec:adversarialtraining}, instantiate three approaches in Section~\ref{subsec:uncondADG} and \ref{sec:cadvda}, and discuss the relation between our framework and previous DeepC~\cite{li2018deep} in Section~\ref{subsec:ourvsdeepc}. Finally, we introduce the implementation in Section~\ref{sec:architectures_ADG}.

\begin{figure}[t]
	\centering
	\includegraphics[width=1.\linewidth]{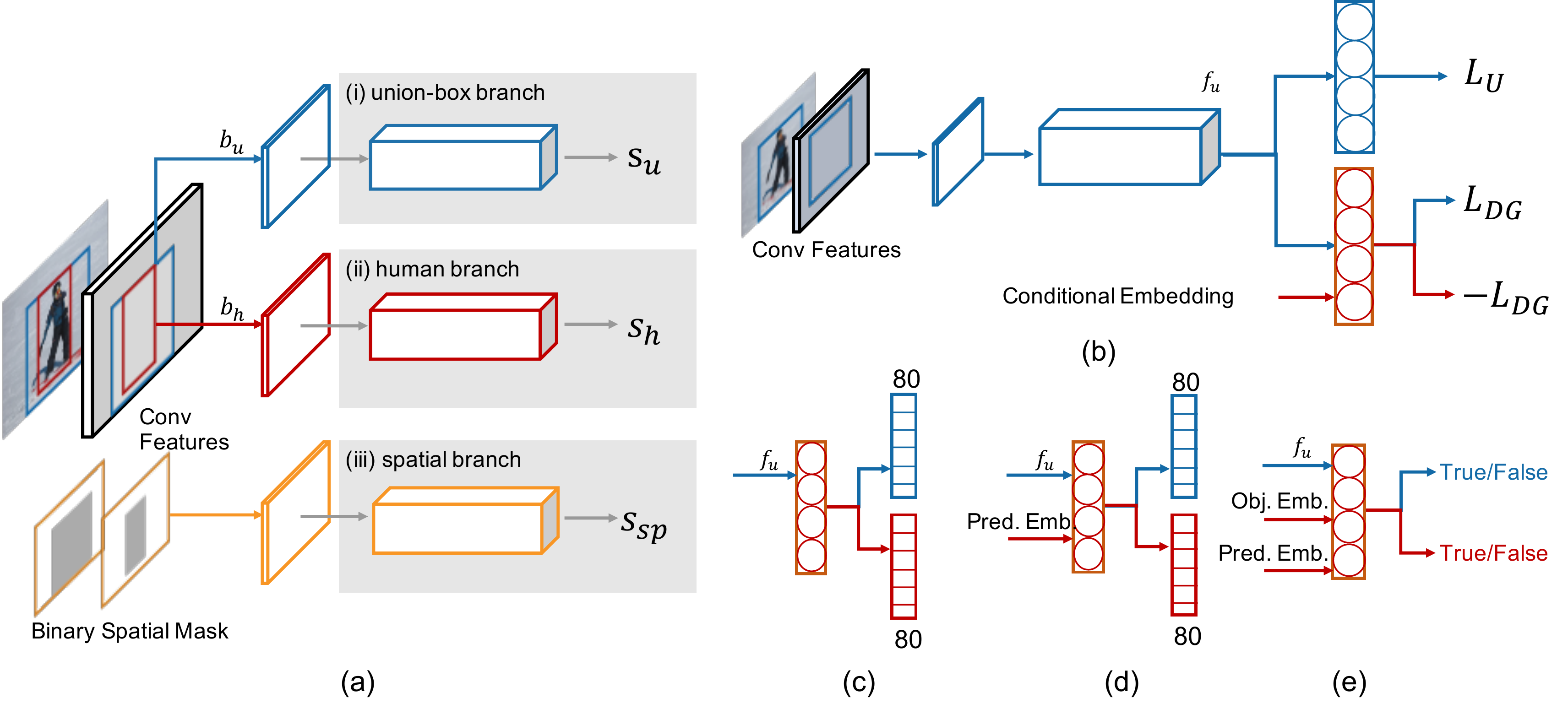}
	\caption{Architectures. (a) Baseline architecture which consists of (i) a union-box branch,(ii) a human branch, and (iii) a spatial branch. (b) the proposed ADG framework for domain generalization. (c) ADG-KLD. (d) CADG-KLD.  (e) CADG-JSD.}
\label{fig:all_frameworks}
\end{figure}

\subsection{Overview}\label{sec:methodoverview}
\noindent Following the conventions in \cite{gao2018ican,fgkioxari2018detecting,li2018transferable}, our baseline model has three branches
to extract different types of visual features for predicate prediction, as shown in Fig.~\ref{fig:all_frameworks}(a). We use the union-box branch rather than the object branch to extract visual features as the union box contains more visual information to capture the \textit{interaction} between human and object. Our proposed domain generalization approaches are \textbf{only} applied to the union-box branch, while keeping the other two branches unchanged for fair comparison.



We denote the human box and union box as $b_h$ and $b_u$. From the three branches, we predict three probabilities of the predicate category, denoted as $s_h$, $s_u$ and $s_{sp}$. 
As the same human-object pair could have multiple predicates as the ground truth labels, we take the predicate prediction as a multi-label classification problem based on a binary sigmoid classifier, and minimizes the cross entropy losses on three branches for each category, denoted as $L_H$, $L_{sp}$, and $L_U$. Then the total loss function is defined as:
\begin{equation}\label{eqn:baseline_loss}
\vspace{-1mm}
L_{baseline} = L_H + L_{sp} + L_U
\vspace{-1mm}
\end{equation}

In the inference time, we rank the scores of each triplet based on the formula $score_{triplet} =  (s_h + s_u) \cdot s_{sp}$.




\subsection{Adversarial domain generalization (ADG)}\label{sec:adversarialtraining}
\noindent To learn a visually grounded relationship feature that can generalize to novel <predicate, object> pairs, the feature should be as object-invariant as possible. In this paper, we focus on learning a visually grounded relationship feature from the union box branch, i.e., $f_u$ in Figure~\ref{fig:all_frameworks} (b), because the spatial branch feature and the human branch feature are both expected to be object-invariant by design.

We view this object-invariant feature learning as a domain generalization (DG) problem, where each object category is  viewed as a separate domain. We aim at learning domain(object)-invariant features for predicting class (predicate category). {\it For a given predicate}, say, "ride", we only have "ride-horse" and "ride-bicycle" in the training data, i.e., training data is only collected from domains "horse" and "bicycle". However, we have unseen pair “ride-dog” in the test data, i.e., we want our model to generalize well to new domain "dog".

This DG problem is extremely challenging, due to three reasons. First, our feature extractor $f_u = F(x)$ is a deep neural network, while nearly all previous DG methods~\cite{khosla2012undoing,ghifary2016scatter,muandet2013domain,xu2014exploiting,ghifary2015domain} are only tested on linear feature extractor. Very recent work~\cite{li2018deep} showed promising results on domain generalization on deep features. Second, the number of domains is large, i.e., 80 object categories in our case. Third, there are huge variations in predicate class distribution across domains. For example, the predicate class distributions of domain "horse" (see Figure~\ref{fig:hico_longtail}(b)) and domain "cup" are largely different. \cite{li2018deep} is only tested on problems with at most 5 domains and does not work with large variation in class distribution across domains, as mentioned in their paper and shown in our results. 

In the rest of this section, we propose a general framework for adversarial domain generalization (ADG).
We introduce a DG regularization $L_{DG}$ into the training, i.e.,
\begin{equation}\label{eqn:adversialDA}
\vspace{-1mm}
L_{total} = L_H + L_{sp} + L_U + \lambda L_{DG},
\vspace{-1mm}
\end{equation}
which effectively inject an inductive bias in the training process to learning domain(object)-invariant features. $L_{DG}$ involves divergence between high-dimensional distributions, so we introduce discriminators to estimate it and perform alternative adversarial training to minimize the total loss function~\eqref{eqn:adversialDA}; see Figure~\ref{fig:all_frameworks}. 

We denote $M$ the number of domains (i.e., object categories) and $K$ the number of classes (i.e., predicate categories). In our task, $M=80$ and $K=117$.


\subsubsection{Unconditional adversarial domain generalization (ADG-KLD)}\label{subsec:uncondADG}
\noindent A first attempt is to enforce the invariance of extracted feature distributions across domains, i.e., $P(f_{u}| obj_i) = P(f_{u}| obj_j)$ for any two different object categories $obj_i$ and $obj_j$. For example, it is expected that the union-box feature distributions of domain "horse" and "elephant" are similar, because they share similar interactions with human. This shared feature space is expected to be more independent from the object categories and more applicable to an unseen domain "donkey".  

Enforcing the mutual similarity between two domains is equivalent to enforce the similarity to the pooled feature distribution for each domain:
\begin{equation}\label{eqn:unconditionDA}
\vspace{-1mm}
    P(f_{u}| obj_i) = P(f_{u}) \quad \forall i \in [M],
\vspace{-1mm}
\end{equation}
where $P(f_{u})$ is the pooled feature distribution
\begin{equation}\label{eqn:advdadist}
\vspace{-1mm}
	P(f_{u}) = \sum_{i\in [M]} \alpha_{i} P(f_{u}| obj_i), \quad \sum_{i\in [M]} \alpha_{i} = 1
\vspace{-1mm}
\end{equation} 
and $\alpha_{i}$ is the relative importance of each domain. In this paper, we choose $\alpha_{i} = N_i/N$, i.e., the fraction of data in domain $i$.

We use adversarial training to enforce this distribution match. Specifically, we introduce a discriminator $D: f_{u} \to [0,1]^{M}$ that tries to classify the domain (object) based on the union-box feature, while the feature extractor $F$ is trying to confuse the discriminator. Formally, they play a mini-max game as follows: 
\begin{equation}\label{eqn:unconditionDA_Di}
\vspace{-1mm}
    \min_{F} \max_{D} \sum_{i\in [M]} \alpha_{i} E_{f \in P(f_{u}| obj_i)}[\log D_i(f)].
\vspace{-1mm}
\end{equation}
In practice, we optimize this with stochastic gradient descent (SGD). For each sample $x$ (from domain $obj(x)$), we update $D$ and $F$ with the minimax loss:
\begin{equation}\label{eqn:unconditionDA_Dierm_sgd}
\min_{F} \max_{D} \log D_{obj(x)}(F(x)),
\end{equation}
where $w_i = N_i/N$ is used to reduce \eqref{eqn:unconditionDA_Di} to \eqref{eqn:unconditionDA_Dierm_sgd}.

Assume infinite capacity of the discriminator $D$, the maximum of \eqref{eqn:unconditionDA_Di} is a weighted summation of KL divergence between distributions:
\begin{equation}\label{eqn:advdakldiv}
\vspace{-1mm}
L_{DG} = KLD := \sum_{i\in [M]} \alpha_{i} KL(P(f_{u}| obj_i) || P(f_{u})).
\vspace{-1mm}
\end{equation} 
We provide the proof in Appendix. Therefore, the adversarial training in \eqref{eqn:unconditionDA_Dierm_sgd} is indeed adding a DG regularization \eqref{eqn:advdakldiv} into the training objective~\eqref{eqn:adversialDA}, effectively enforcing invariance of feature distributions \eqref{eqn:unconditionDA}.

\subsubsection{Conditional adversarial domain generalization}\label{sec:cadvda}
\noindent However, due to class distribution mismatch across different domains, \eqref{eqn:unconditionDA} may not be the invariance we want to achieve in object-invariant predicate detection. For example, the predicate class distributions of domain "horse" and domain "cup" are completely different, and thus it is not reasonable to enforce predicate feature distribution match between these two domains.

By considering the class distribution mismatch, we can instead enforce that the conditional feature distribution is the same over different domains (objects):
\begin{equation}\label{eqn:cadvda}
\vspace{-1mm}
    P(f_{u}| obj_i, pred_k) = P(f_{u}| pred_k) \quad \forall i \in [M], k \in [K],
\vspace{-1mm}
\end{equation}
where the pooled conditional distribution 
\begin{equation}\label{eqn:cadvdadist}
\vspace{-1mm}
\begin{aligned}
P(f_{u}| pred_k) = \sum_{i\in [M]} \alpha_{i}^{(k)} P(f_{u}| obj_i, pred_k)
\end{aligned}
\vspace{-1mm}
\end{equation} 
and $\alpha_{i}^{(k)}$ ($\sum_{i\in [M]} \alpha_{i}^{(k)} = 1$) is the relative importance of different conditional distributions. In this paper, we choose $\alpha_{i}^{(k)} = \frac{N_{i}^{(k)}}{N^{(k)}}$ where $N_{i}^{(k)}$ is the number of training samples in domain (object category) $i$ and with (predicate) class $k$ and $N^{(k)}=\sum_{i\in [M]} N_{i}^{(k)}$ is the number of samples with (predicate) class $k$ in the full training dataset. One can also choose other weights, like $\alpha_{i}^{(k)} \equiv 1/M$ in \cite{li2018deep}, and methods presented below can be still applied with corresponding re-weighting of training samples. However, as we show in Table~\ref{tb:newsplit}, $\alpha_{i}^{(k)} \equiv 1/M$ in \cite{li2018deep} gets very marginal improvement while our weights achieves significant improvement. We provide more discussions on this in Appendix.

In the following, we present two methods to achieve \eqref{eqn:cadvda}, by specifying two different kinds of divergence as the DG regularization $L_{DG}$ in \eqref{eqn:adversialDA}.

\noindent\textbf{KL Divergence (CADG-KLD)}
\noindent In the first method, similar to Section~\ref{subsec:uncondADG}, we enforce \eqref{eqn:cadvda} with the following conditional KL divergence:
\begin{equation}\label{eqn:cadvdakldiv}
\vspace{-1mm}
\begin{aligned}
& L_{DG} = CKLD := \sum_{k\in[K]} \alpha^{(k)} \sum_{i\in [M]} \alpha_{i}^{(k)}  KL(P(f_{u}| obj_i, pred_k) || P(f_{u}| pred_k)) ,
\end{aligned}
\vspace{-1mm}
\end{equation} 
where $\alpha_{i}^{(k)}$ are the weights defined in \eqref{eqn:cadvdadist} and $\alpha^{(k)}$ are weights that balance different classes. To achieve this, we introduce discriminators conditioned on $pred_k$, written as $D(f; pred_k) \in [0,1]^{M}$, which tries to classify the sample's domain (object category). The feature extractor $F$ tries to confuse the discriminator:
\begin{equation}\label{eqn:cadvdakldiv_Di}
\vspace{-1mm}
\begin{aligned}
& \min_{F} \max_{D} \sum_{k\in[K]} \alpha^{(k)} \sum_{i\in [M]} \alpha_{i}^{(k)}  E_{f \in P(f_{u}| obj_i, pred_k)}[\log D_i(f; pred_k)] .
\end{aligned}
\vspace{-1mm}
\end{equation}
We show that in the appendix, assuming infinite capacity of $D$, the maximum of \eqref{eqn:cadvdakldiv_Di} (up to a constant) is indeed $CKLD$ defined in \eqref{eqn:cadvdakldiv}. We propose to use $\alpha^{(k)} = N^{(k)}/N$ and $\alpha_{i}^{(k)} = N_{i}^{(k)}/N^{(k)}$. In practice, we optimize \eqref{eqn:cadvdakldiv_Di} with SGD, in which for each sample $x$ (with label $pred(x)$ and domain $obj(x)$), we update $D$ and $F$ with the following regularization:
\begin{equation}\label{eqn:cadvdakldiv_Derm_sgd}
\min_{F} \max_{D} \log D_{obj(x)}(F(x); pred(x)).
\end{equation}

\noindent\textbf{Jensen-Shannon Divergence (CADG-JSD)}
In the second method, we enforce \eqref{eqn:cadvda} with the following conditional JSD:
\begin{equation}\label{eqn:cadvdajddiv}
\vspace{-1mm}
\begin{aligned}
& L_{DG} = CJSD := \sum_{k\in[K]} \alpha^{(k)} \sum_{i\in [M]} \alpha_{i}^{(k)}  JSD(P(f_{u}| obj_i, pred_k) || P(f_{u}| pred_k)) ,
\end{aligned}
\vspace{-1mm}
\end{equation} 
where $\alpha^{(k)}$ and $\alpha_{i}^{(k)}$ are weights that users specified to balance different terms. We introduce discriminators conditioned on $pred_k$, written as $D(f, obj_i; pred_k) \in [0,1]$. The objective of the discriminator is to distinguish where the features is from the domain specific distribution $P(f_{u}| obj_i, pred_k)$ or from the pooled distribution $P(f_{u}| pred_k)$. The feature extractor is trying to confuse the discriminator:
\begin{equation}\label{eqn:cadvdajddiv_D}
\vspace{-1mm}
\begin{aligned}
    \min_{F} \max_{D} \sum_{k\in[K]} & \alpha^{(k)} \sum_{i\in [M]} \alpha_{i}^{(k)}  \big( E_{f \in P(f_{u}| obj_i, pred_k)}[\log D(f, obj_i; pred_k)] \\
    & + E_{f \in P(f_{u}| pred_k)}[\log (1-D(f, obj_i; pred_k))] \big).
\end{aligned}
\vspace{-1mm}
\end{equation}
We show that in the appendix, assuming infinite capacity of $D$, the maximum of \eqref{eqn:cadvdajddiv_D} (up to a constant) is indeed $CJSD$ defined in \eqref{eqn:cadvdajddiv}. We propose to use $\alpha^{(k)} = N^{(k)}/N$ and $\alpha_{i}^{(k)} = N_{i}^{(k)}/N^{(k)}$, as in CADG-KLD. In practice, we optimize this with SGD, in which for each sample $x$ (with label $pred(x)$ and domain $obj(x)$), we update the $D$ and $F$ with the following regularization:
\begin{equation}\label{eqn:cadvdajddiv_Derm_sgd}
\vspace{-1mm}
\begin{aligned}
    & \min_{F} \max_{D} ~ \log D(F(x), obj(x); pred(x)) +  \\
    & \sum_{i\in [M]} \frac{N_{i}^{pred(x)}}{N^{pred(x)}} \log (1-D(F(x), obj_i; pred(x))).
\end{aligned}
\vspace{-1mm}
\end{equation}



\subsubsection{A general recipe for ADG}
\label{subsec:generalrecipe}
Finally, we summarize a general recipe consisting of 3 steps for AGD training. First, one chooses the invariance to be enforced, such as the unconditional feature distribution matching~\eqref{eqn:unconditionDA} or the conditional matching~\eqref{eqn:cadvda}. Second, one chooses the distribution divergence (or distance) to be used as the DG regularization, such as KL divergence~\eqref{eqn:advdakldiv}\eqref{eqn:cadvdakldiv} and JD divergence~\eqref{eqn:cadvdajddiv}. With many GAN variants, one can freely pick many other divergences/distances, see, e.g., \cite{arjovsky2017wasserstein,li2017mmd,nowozin2016f,zhang17gan}. Third, one utilizes various GAN formulations, converts the DG regularization into an adversarial problem, and then performs the adversarial training with SGD, like in \eqref{eqn:unconditionDA_Dierm_sgd}, \eqref{eqn:cadvdakldiv_Derm_sgd} or \eqref{eqn:cadvdajddiv_Derm_sgd}.

\subsubsection{Relation between the proposed framework and the DeepC and CIDDG \cite{li2018deep}}
\label{subsec:ourvsdeepc}
The loss function of DeepC is a special case of our general framework: when $\alpha^{(k)} = \alpha_i^{(k)} = 1$ in \eqref{eqn:cadvdakldiv_Di}, our CADG-KLD reduces to DeepC. We propose to use $\alpha_{i}^{(k)} = N_{i}^{(k)}/N^{(k)}$, which gives significantly better results. Thank to our framework, the meaning of these parameters and our choice are very intuitive: domains with larger sample sizes should contribute more to the pooled distribution~\eqref{eqn:cadvdadist} and should have larger weights in the distribution matching regularization~\eqref{eqn:cadvdakldiv_Di}.
CIDDG reweights conditional distributions $P(f_{u}| obj_i, pred_k)$ with $1/\alpha_{i}^{(k)}$ to compute their class prior-normalized minimax value. This cannot be applied in our case, because many $\alpha_{i}^{(k)}$'s are 0 since many predicate-object pairs have never been seen in training set. After all, the large number of domains and huge variation across domains (i.e., $\alpha_{i}^{(k)}=0$ for many $(i,k)$'s) make the off-the-shelf DeepC have poor performance in the novel HOI detection task. Our results in Table \ref{tb:newsplit} and \ref{tb:unrelsplit} prove the advantage of our methods over DeepC.

\subsection{Architectures}\label{sec:architectures_ADG}
\noindent Fig.~\ref{fig:all_frameworks} shows the network structure for ADG-KLD, CADG-KLD, and CADG-JSD. As Fig.~\ref{fig:all_frameworks}(b) shows, the adversarial branch (blue path) of ADG-KLD and the union-box branch (red path) are trained iteratively in an adversarial manner. 
CADG-KLD is a conditional version of ADG-KLD, where the adversarial branch takes the predicate embedding as an additional input (Fig.~\ref{fig:all_frameworks}(d)). The goal of the adversarial branch in CADG-JSD is to distinguish whether the input feature is from the object-specific distribution or the pooled distribution, given the predicate. Therefore, as shown in Fig.~\ref{fig:all_frameworks}(e), it takes the feature, the object embedding and predicate embedding as inputs and predicts a binary output.

\section{Experiments}\label{sec:exp}


\noindent\textbf{Implementation details}
\noindent For feature extraction backbone, we adopt ResNet-50~\cite{he2016deep} and follow the setting of \cite{gao2018ican,li2018transferable}. There are three branches in our baseline model, and our domain generalization framework is only applied on the union-box branch.
It takes around 60 hours on 4 NVIDIA P100 GPU for training the model, and we we apply the linear scaling rule according to ~\cite{goyal2017accurate}. 


\subsection{Evaluations}\label{subsec:evluations}
\noindent\textbf{Baseline} To achieve a fair comparison, we first show that our strong baseline performs better than other approaches with similar architectures~\cite{chao2018learning,fgkioxari2018detecting,gao2018ican,qi2018learning} on the original split of HICO-DET dataset in Table~\ref{tb:originalsplit}. Then we use this baseline model in the new split for comparison with our proposed framework. In our experiments, we focus on examining whether our proposed adversarial training as a plug-in module can robustly improve the baseline. It can be easily applied to other recent HOI detection methods~\cite{li2019transferable,gupta2019no,wan2019pose,wang2019deep,zhou2019relation}.

\begin{table}[!ht]
\setlength{\tabcolsep}{2.5pt} 
\center
\resizebox{0.7\linewidth}{!}{
  \begin{tabular}{l c c c c c c }
   \toprule
  {} 
         & \multicolumn{3}{c}{Default}        
         & \multicolumn{3}{c}{Known Object} \\    
   \cmidrule(r){2-4}
   \cmidrule(r){5-7}
 Method & Full & Rare & Non Rare &   Full & Rare   & Non Rare  \\
    \cmidrule(r){1-7}
   \multirow{1}{*}{}
 HO-RCNN~\cite{chao2018learning}  & 7.81 & 5.37 & 8.54 &   10.41 & 8.94   & 10.85\\
    \cmidrule(r){1-7}
    \multirow{1}{*}{}
 InteractNet~\cite{fgkioxari2018detecting}  & 9.94 & 7.16 & 10.77 &   - & -   & -\\
 \cmidrule(r){1-7}
    \multirow{1}{*}{}
 iCAN~\cite{gao2018ican}  & 14.84 & 10.45 & 16.15 &   16.26 & 11.33   & 17.73 \\
 \cmidrule(r){1-7}
    \multirow{1}{*}{}
 GPNN~\cite{qi2018learning}  & 13.11 & 9.34 & 14.23 &   -- & --   & -- \\
 \cmidrule(r){1-7}
    \multirow{1}{*}{}
 Baseline  & \textbf{15.27} & \textbf{11.82} & \textbf{16.31} &   \textbf{16.97} & \textbf{13.94}   & \textbf{17.87}\\
\bottomrule
\end{tabular}
}
\vspace{2mm}
\caption{Performance on the original HICO-DET dataset}
\label{tb:originalsplit}
\vspace{-5mm}
\end{table}

\begin{table*}[t]
\footnotesize
\setlength{\tabcolsep}{2.5pt} 
\center
\resizebox{1.0\linewidth}{!}{
  \begin{tabular}{c l c c c c c c c c c c c c}
   \toprule
  & \multicolumn{1}{c}{}     
         & \multicolumn{4}{c}{trainval}        
         & \multicolumn{4}{c}{testval}     
         & \multicolumn{4}{c}{test}       \\
  \cmidrule(r){3-6}
  \cmidrule(r){7-10}
  \cmidrule(r){11-14}
  & \multicolumn{1}{c}{}     
         & \multicolumn{2}{c}{PredCls}        
         & \multicolumn{2}{c}{PredDet}     
         & \multicolumn{2}{c}{PredCls}   
         & \multicolumn{2}{c}{PredDet}  
         & \multicolumn{2}{c}{PredCls}  
         & \multicolumn{2}{c}{PredDet}    \\

   \cmidrule(r){3-4}
   \cmidrule(r){5-6}
   \cmidrule(r){7-8}
   \cmidrule(r){9-10}
   \cmidrule(r){11-12}
   \cmidrule(r){13-14}
 & Method  & R@1 & R@5 & R@5 & R@10  & R@1  & R@5  &  R@5  & R@10  &  R@1  & R@5 &  R@5  & R@10    \\
    \cmidrule(r){2-14}
   \multirow{3}{*}
 & Frequency & 43.04 & 95.25 & 54.23 & 69.58 & 0.00 & 1.93 & 0.25 & 2.49 & 0.00 & 0.15 & 0.12 & 2.43  \\
 & Baseline & 41.87 & 91.09 & 51.17 & 66.92 & 32.03 & 75.95 & 52.96 & 68.88 & 32.18 & 76.73 & 51.92 & 67.45  \\
 & DeepC~\cite{li2018deep} & 40.58 (\textbf{-3.1\%}) & 90.41 & 50.21 & 65.85 & 32.82 (\textbf{+2.5\%}) & 76.92 & 53.89 & 69.54 & 32.80 (\textbf{+1.9\%}) & 77.62 & 52.33 & 67.95  \\
 \cmidrule(r){2-14}
   \multirow{3}{*}
 & ADG-KLD & 40.08 (\textbf{-4.3\%}) & 89.85 & 49.72 & 65.94 & 46.78 (\textbf{+46.1\%}) & 80.27 & 61.05 & 74.48 & 48.68 (\textbf{+51.3\%}) & 81.61 & 60.98 & 74.34  \\
 & CADG-KLD & 39.92 (\textbf{-4.7\%}) & 88.09 & 49.07 & 64.66 & 40.33 (\textbf{+25.9\%}) & 75.85 & 55.04 & 69.43 & 41.66 (\textbf{+29.5\%}) & 76.89 & 54.88 & 68.96 \\
 & CADG-JSD & 40.15 (\textbf{-4.1\%}) & 88.48 & 50.12 & 65.38 & 42.29 (\textbf{+32.0\%}) & 76.60 & 56.10 & 69.68 & 43.47 (\textbf{+35.1\%}) & 77.55 & 56.08 & 69.26  \\
\bottomrule
\end{tabular}
}
\vspace{2mm}
\caption{Performance on the new split of HICO-DET dataset. For ~\cite{li2018deep}, ADG-KLD, CADG-KLD, and CADG-JSD, we measure the relative ratio with baseline to compute the gain and loss on PredCls R@1}
\label{tb:newsplit}
\vspace{-5mm}
\end{table*}

\noindent\textbf{HICO-DET Dataset} We conducted extensive experiments to hyper-tune the baseline method on the new split in order to achieve its best performance, as shown in Table~\ref{tb:newsplit}. We observe that the baseline models perform much worse in test set than in trainval set, which is expected and indicates that the baseline model has very limited generalization ability. Another simple baseline model is the \textbf{frequency} model, which gets the statistics of the <human, predicate, object> triplets and predicts the predicate class only based on the frequency. From Table~\ref{tb:newsplit} we observe that the frequency model can only make random predictions on the test set as expected, because the triplets in the test set do not exist in the training set. Moreover, in the trainval set, the performance of the frequency model is even a little higher with our baseline models, which indicates that the baseline models are mostly learning from the frequency bias in the training set. This is also observed from Visual Genome dataset~\cite{krishna2017visual} by Zellers \emph{et al.}~\cite{zellers2018neural}. Besides, DeepC~\cite{li2018deep} proposes a deep domain generalization approach which is only applied on some toy datasets. We extend this method on the new split as another baseline method.

We compare our proposed adversarial domain generalization framework with the baseline models. Our proposed methods all decrease by around 4\% on PredCls R@1 of the trainval set, which is reasonable as the proposed model disentangles object and predicate representations. On the other hand, while DeepC~\cite{li2018deep} does not show much improvement over the baseline, our models gain around 26\%\texttildelow 51\% on testval and test set.
As discussed in Section~\ref{subsec:ourvsdeepc}, DeepC is a special case of CADG-KLD with uniform weights across domains, while we propose to use natural weights. From Table~\ref{tb:newsplit} and \ref{tb:unrelsplit}, this change improve CADG-KLD's performance significantly.

\noindent\textbf{UnRel Dataset}
To further investigate the generalization ability of the proposed models on novel relation triplets, we evaluate the metrics on UnRel dataset directly using the models trained on HICO-DET which are the same as the models in Table~\ref{tb:newsplit}. The evaluation results are shown in Table~\ref{tb:unrelsplit}. While we observe minor performance gain on DeepC~\cite{li2018deep}, all our proposed models show significant improvements uniformly on the metrics, and ADG-KLD and CADG-JSD both have even better performance, with an increase of over 75\% and 125\% comparing with the baseline model. Note that UnRel dataset has triplets with unseen object classes and even non-human subjects, which indicates that our proposed models generalize better to unseen triplet categories than baseline.

\begin{table}[ht]
\footnotesize
\setlength{\tabcolsep}{2.5pt} 
\center
\resizebox{0.7\linewidth}{!}{
  \begin{tabular}{c l  c c c c c c}
   \toprule
  & \multicolumn{1}{c}{}         
         & \multicolumn{2}{c}{split 1}     
         & \multicolumn{2}{c}{split 2}   
         & \multicolumn{2}{c}{split 3}  \\
  \cmidrule(r){3-4}
  \cmidrule(r){5-6}
  \cmidrule(r){7-8}
 & Method  & R@1 & R@5 & R@1 & R@5 & R@1 & R@5   \\
    \cmidrule(r){2-8}
   \multirow{3}{*}
 & Frequency & 0.00 & 0.00 & -- & -- & -- & --   \\
 & Baseline & 13.31 & 63.47 & 17.68 & 66.41 & 14.76 & 61.00 \\
 & DeepC~\cite{li2018deep} & 16.72 & 65.94 & 21.97 & 67.93 & 14.90 (\textbf{+1\%}) & 63.51 \\
 \cmidrule(r){2-8}
   \multirow{3}{*}
 & ADG-KLD & 39.01 & 74.30 & 41.41 & 76.52 & 33.15 (\textbf{+125\%}) & 69.22 \\
 & CADG-KLD & 19.20 & 69.35 & 26.01 & 71.21 & 20.19 (\textbf{+37\%}) & 60.86  \\
 & CADG-JSD & 27.24 & 73.99 & 34.09 & 75.51 & 26.04 (\textbf{+75\%}) & 64.07  \\
\bottomrule
\end{tabular}
}
\vspace{2mm}
\caption{Performance of the evaluation metric PredCls on the UnRel dataset. For ~\cite{li2018deep}, ADG-KLD, CADG-KLD, and CADG-JSD, we measure the relative ratio with baseline to compute the gain and loss on R@1}
\label{tb:unrelsplit}
\vspace{-5mm}
\end{table}

\noindent\textbf{Qualitative Results}
We show our human-object interaction detection results in Figure~\ref{fig:qualitative}, where each subplot illustrates one <human, predicate, object> triplet. We choose the predicate with the top score of each instance for visualization and comparison. Images of the first three columns are from the HICO-DET dataset. 
We note that our models perform uniformly better than the baseline, and detect the predicates when facing unseen triplets in the test images. The last column of  Figure~\ref{fig:qualitative} shows rare scenes with unseen triplets in the images, and there is even one instance that takes a cat as the subject. This implies that our models  learn better features of predicates themselves which is invariant to objects so as to get strong generalization ability.

\begin{figure*}[ht]
\centering
\scriptsize
\setlength{\tabcolsep}{1pt}
\begin{tabular}{ccc}
 \includegraphics[width=.33\textwidth]{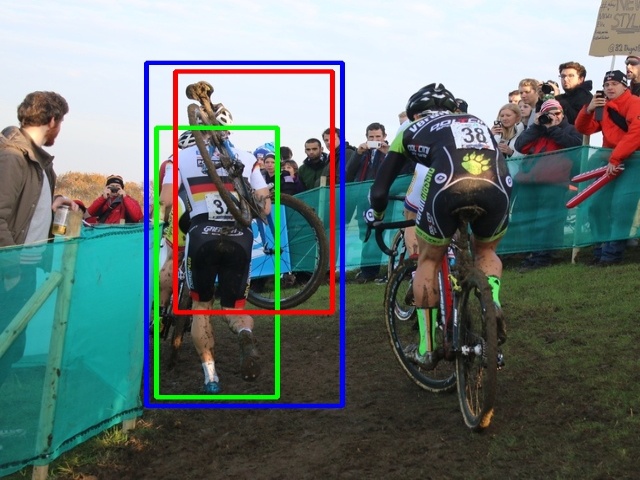}&
 \includegraphics[width=.33\textwidth]{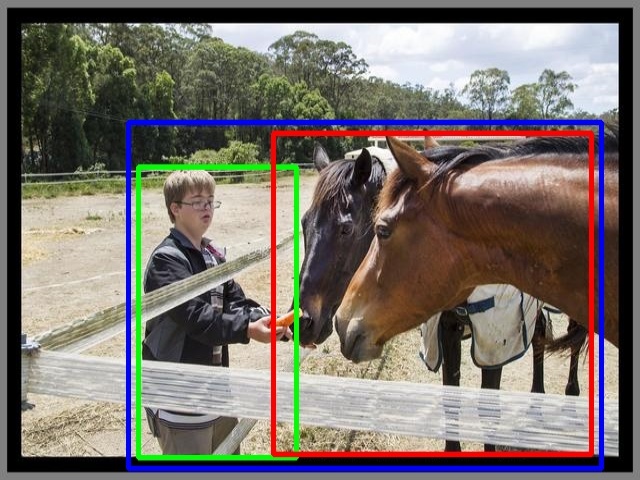}&
 \includegraphics[width=.33\textwidth]{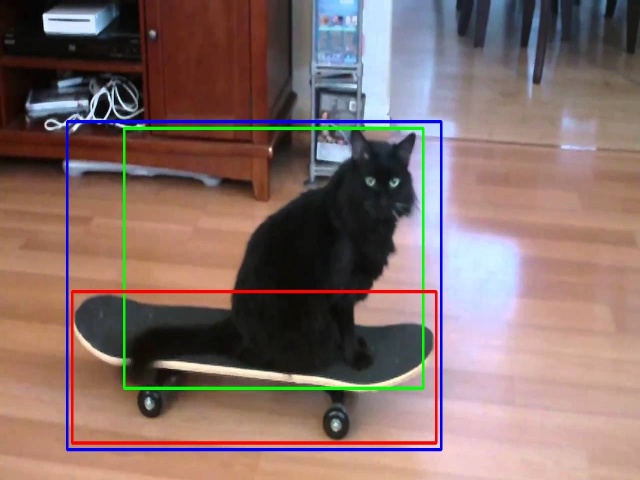}\\
 \tabincell{l}{
    \textbf{Baseline}: \textcolor{red}{straddle}\\
    \textbf{ADG-KLD}: \textcolor{darkgreen}{carry}\\
    \textbf{CADG-KLD}: \textcolor{red}{hold}\\
    \textbf{CADG-JSD}: \textcolor{darkgreen}{carry}} 
& \tabincell{l}{
    \textbf{Baseline}: \textcolor{red}{watch}\\        
    \textbf{ADG-KLD}: \textcolor{red}{no interaction}\\
    \textbf{CADG-KLD}: \textcolor{darkgreen}{feed}\\
    \textbf{CADG-JSD}: \textcolor{darkgreen}{feed}} 
& \tabincell{l}{
    \textbf{Baseline}: \textcolor{red}{jump}\\        
    \textbf{ADG-KLD}: \textcolor{darkgreen}{ride}\\
    \textbf{CADG-KLD}: \textcolor{darkgreen}{ride}\\
    \textbf{CADG-JSD}: \textcolor{darkgreen}{ride}}  \\
 \includegraphics[width=.33\textwidth]{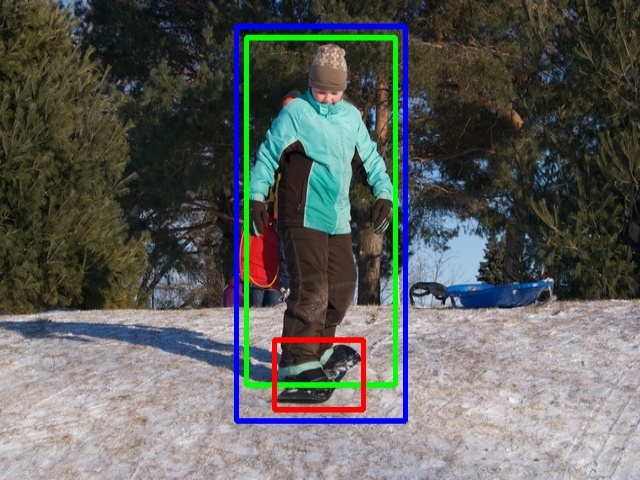}&
 \includegraphics[width=.33\textwidth]{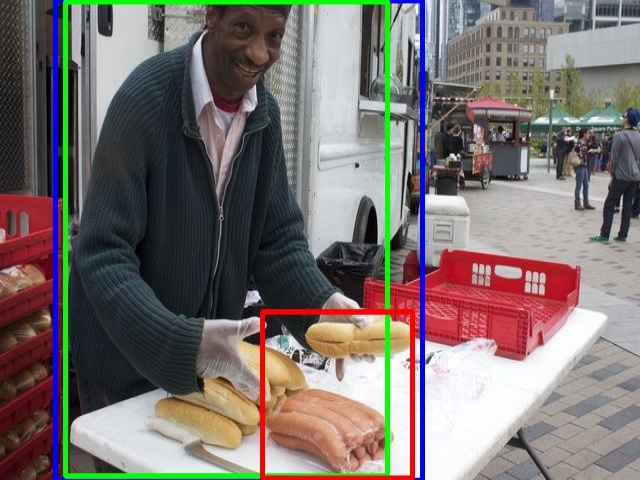}&
 \includegraphics[width=.33\textwidth]{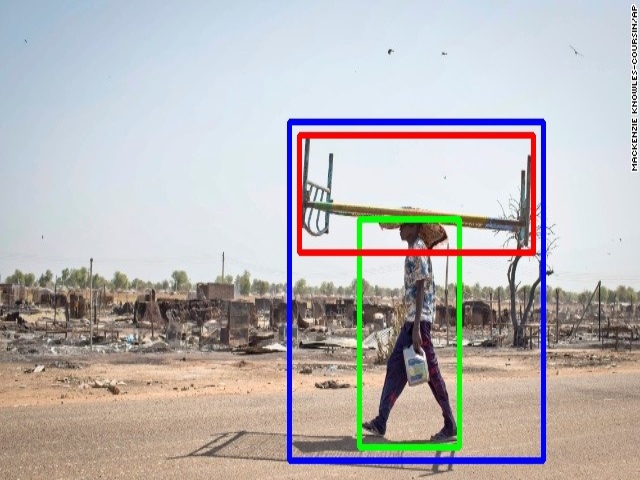}\\
 \tabincell{l}{
    \textbf{Baseline}: \textcolor{red}{wear}\\        
    \textbf{ADG-KLD}: \textcolor{darkgreen}{ride}\\
    \textbf{CADG-KLD}: \textcolor{darkgreen}{ride}\\
    \textbf{CADG-JSD}: \textcolor{darkgreen}{ride}} 
& \tabincell{l}{
    \textbf{Baseline}: \textcolor{red}{cut}\\        
    \textbf{ADG-KLD}: \textcolor{darkgreen}{hold}\\
    \textbf{CADG-KLD}: \textcolor{red}{cut}\\
    \textbf{CADG-JSD}: \textcolor{red}{cut}}  
& \tabincell{l}{
    \textbf{Baseline}: \textcolor{red}{fly}\\        
    \textbf{ADG-KLD}: \textcolor{darkgreen}{carry}\\
    \textbf{CADG-KLD}: \textcolor{red}{hold}\\
    \textbf{CADG-JSD}: \textcolor{darkgreen}{carry}} \\
\end{tabular}
\caption{Qualitative results on test images. \textbf{Green box:} human. \textbf{Red box:} object. \textbf{Blue box:} union box of object and human with a margin. \textbf{Green text} indicates correct predictions and \textbf{red text} implies the wrong ones. Images of first two columns are from the \textbf{HICO-DET} dataset, and images of the last column are from the \textbf{UnRel} dataset.}
\vspace{-12pt}
\label{fig:qualitative}
\end{figure*} 

\subsection{Analysis}\label{subsec:analysis}

\noindent\textbf{Per-class evaluations}
We evaluate the PredCls R@1 for each class in Table~\ref{tb:perclassrecall}. Among all the predicates, we choose some representative verbs including both high-frequency and low-frequency ones, where the frequency reflects the number of positive instances in training. For each predicate, we make the statistics of the number of instances in the test set and evaluate the R@1 metric. We observe from the table that our proposed models improve significantly over the baseline model for high-frequency predicates, and get comparable results for low-frequency predicates. 
We also evaluate the mean R@1 which is an average of the per-class R@1, where we still get an improvement of around 10\%\texttildelow 26\% over the baseline.

\begin{table}[ht]
\footnotesize
\setlength{\tabcolsep}{2.5pt} 
\center
\resizebox{0.9\linewidth}{!}{
  \begin{tabular}{c l  c c c c c}
   \toprule

  & \multicolumn{1}{c}{} 
         & \multicolumn{1}{c}{\# instances}         
         & \multicolumn{1}{c}{Baseline}  
         & \multicolumn{1}{c}{ADG-KLD}         
         & \multicolumn{1}{c}{CADG-KLD} 
         & \multicolumn{1}{c}{CADG-JSD}      \\
    \cmidrule(r){2-7}
   \multirow{4}{*}
 & hold  & 14956 & 43.50 & 61.64 & 70.19 & 72.92  \\
 & ride  & 13967  & 27.32  & 88.98  & 66.84 & 70.94  \\
 & sit on  & 11051 & 22.08 & 32.93  & 16.73  & 19.10  \\
 & carry  & 4526 & 1.82 & 12.87 & 2.48 & 3.96  \\
 \cmidrule(r){2-7}
   \multirow{6}{*}
  & watch  & 1553 & 7.89 & 11.84 & 14.47 & 15.79  \\
  & walk  & 673 & 27.27 & 1.30 & 22.08 & 35.06  \\
  & feed  & 555 & 6.15 & 1.54 & 6.15 & 7.69  \\
  & cut  & 367 & 5.71 & 8.57 & 11.43 & 8.57  \\
  & \tabincell{l}{push, exit, etc.}  & <250 & 0.00 & 0.00 & 0.00 & 0.00  \\
 \cmidrule(r){2-7}
   \multirow{2}{*}
  & overall  & -- & 32.18 & 48.68 (\textbf{+51.3\%}) & 41.66 (\textbf{+29.5\%}) & 43.47 (\textbf{+35.1\%}) \\
  & mean  & -- & 5.96 & 6.53 (\textbf{+9.6\%}) & 7.52 (\textbf{+26.2\%}) & 6.70 (\textbf{+12.4\%})  \\
\bottomrule
\end{tabular}
}
\vspace{2mm}
\caption{Per-class evaluations of PredCls R@1 on HICO-DET test set. The second column indicates the number of instances of each predicate in the training set. We show the baseline and our proposed models for each action}
\label{tb:perclassrecall}
\vspace{-5mm}
\end{table}

\begin{table}[ht]
\footnotesize
\setlength{\tabcolsep}{2.5pt} 
\center
\resizebox{.9\linewidth}{!}{
  \begin{tabular}{c l  c c c c c c}
   \toprule
  & \multicolumn{1}{c}{}         
         & \multicolumn{2}{c}{trainval}     
         & \multicolumn{2}{c}{testval}   
         & \multicolumn{2}{c}{test}     \\
  \cmidrule(r){3-4}
  \cmidrule(r){5-6}
  \cmidrule(r){7-8}
  & \multicolumn{1}{c}{} 
         & \multicolumn{1}{c}{PredCls}         
         & \multicolumn{1}{c}{PredDet}  
         & \multicolumn{1}{c}{PredCls}         
         & \multicolumn{1}{c}{PredDet} 
         & \multicolumn{1}{c}{PredCls}         
         & \multicolumn{1}{c}{PredDet}    \\
  \cmidrule(r){3-3}
  \cmidrule(r){4-4}
  \cmidrule(r){5-5}
  \cmidrule(r){6-6}
  \cmidrule(r){7-7}
  \cmidrule(r){8-8}
 & Method  & R@1 & R@5 & R@1 & R@5 & R@1 & R@5    \\
    \cmidrule(r){2-8}
   \multirow{2}{*}
 & Baseline(HSp)  & 37.25 & 48.05 & 28.31 & 51.04 & 28.35 & 50.68  \\
 & Baseline(full) & 41.87 & 51.17 & 32.03 & 52.96 & 32.18 (\textbf{+13.5\%}) & 51.92  \\
 \cmidrule(r){2-8}
   \multirow{3}{*}
 & ADG-KLD(HSp) & 41.34 & 51.09 & 43.74 & 59.68 & 44.94 & 59.34  \\
 & ADG-KLD(full) & 40.08 & 49.72 & 46.78 & 61.05 & 48.68 (\textbf{+8.3\%}) & 60.98  \\
 & CADG-KLD(HSp) & 36.74 & 47.55 & 25.57 & 49.06 & 26.33 & 48.24  \\
 & CADG-KLD(full) & 39.92 & 49.07 & 40.33 & 55.04 & 41.66 (\textbf{+58.2\%}) & 54.88  \\
 & CADG-JSD(HSp) & 37.65 & 48.74 & 26.13 & 49.93 & 22.25 & 49.30  \\
 & CADG-JSD(full) & 40.15 & 50.12 & 42.29 & 56.10 & 43.47 (\textbf{+95.4\%}) & 56.08  \\
\bottomrule
\end{tabular}
}
\vspace{2mm}
\caption{Ablation study on HICO-DET dataset. For each model, we show its inference results using the full model and HSp branches. The relative gain of the union-box branch of each model is also calculated}
\label{tb:unionboxbranch}
\vspace{-5mm}
\end{table}

\noindent\textbf{Ablation study}
In Table~\ref{tb:unionboxbranch}, we evaluate the contributions of the object branch in our full model to the results. HSp inference represents the inference using only the human-box branch (H) and spatial branch (Sp), while the full model makes use of all three branches for prediction. Comparing with the baseline model which gains 13.5\% in test set, ADG-KLD shows that this unconditional domain generalization method cannot get very good features on the union-box branch, and the adversarial training process indirectly optimizes the human-box branch and the spatial branch. On the other hand, the union-box branch of CADG-KLD and CADG-JSD shows significant improvement on the model, with around 60\%\texttildelow 100\% in the test set. This indicates that the union-box branch of these conditional domain generalization models learn much better features than the unconditional one, which is also the reason that we go beyond the unconditional model even if it shows overall great performance.

\begin{figure*}[ht]
\centering
\small
\setlength{\tabcolsep}{1pt}
\begin{tabular}{ccccc}
 \includegraphics[width=.2\textwidth]{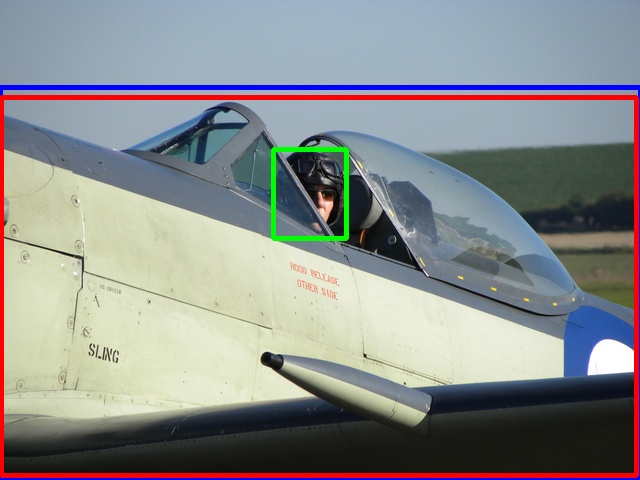}&
 \includegraphics[width=.2\textwidth]{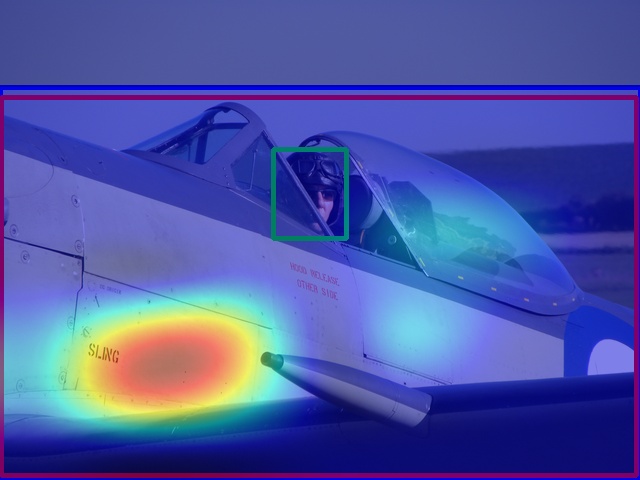}&
 \includegraphics[width=.2\textwidth]{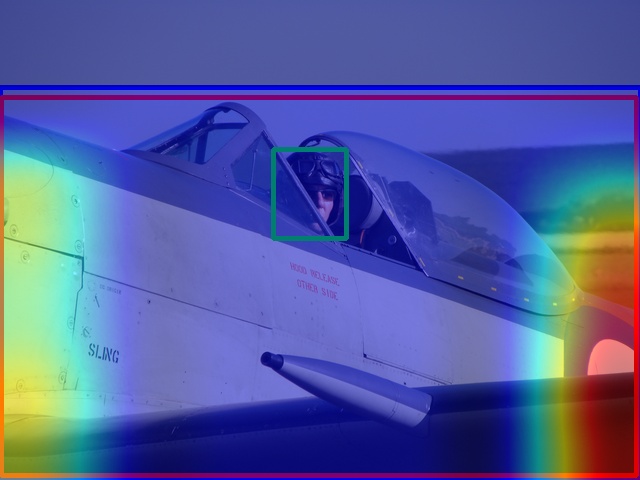}&
 \includegraphics[width=.2\textwidth]{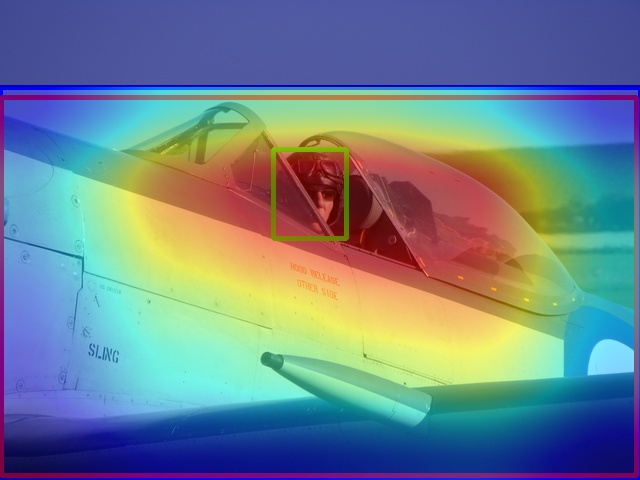}&
 \includegraphics[width=.2\textwidth]{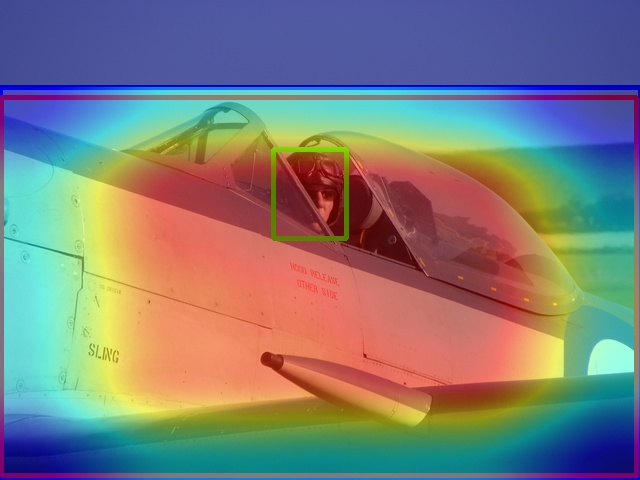}\\
 \includegraphics[width=.2\textwidth]{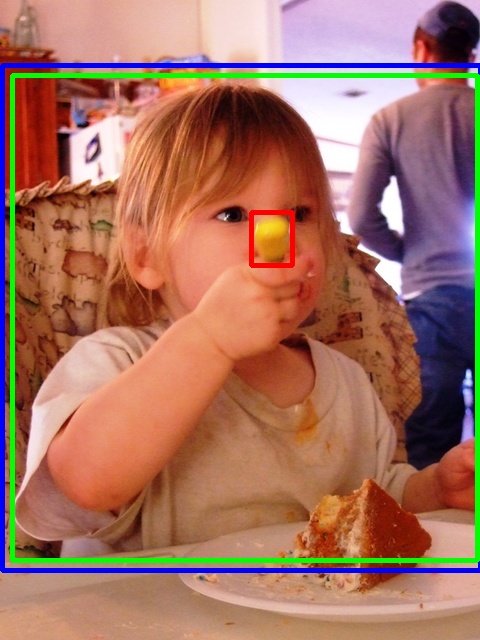}&
 \includegraphics[width=.2\textwidth]{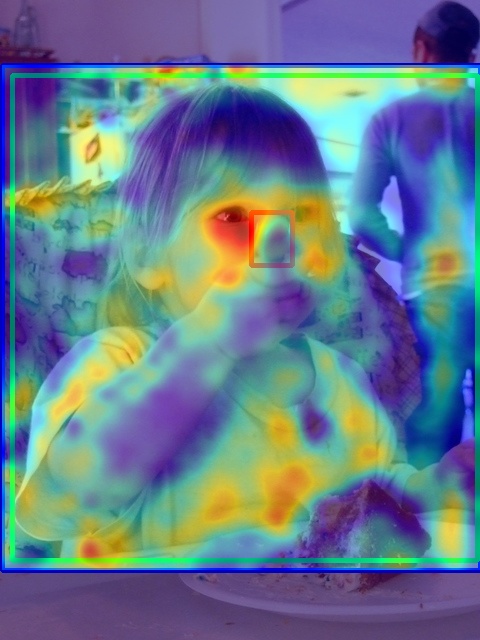}&
 \includegraphics[width=.2\textwidth]{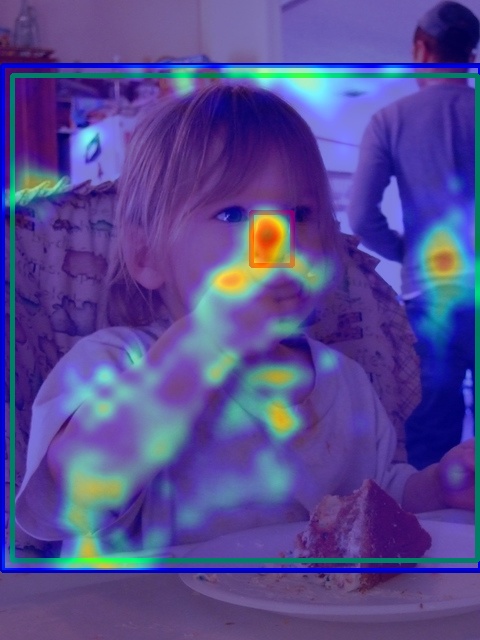}&
 \includegraphics[width=.2\textwidth]{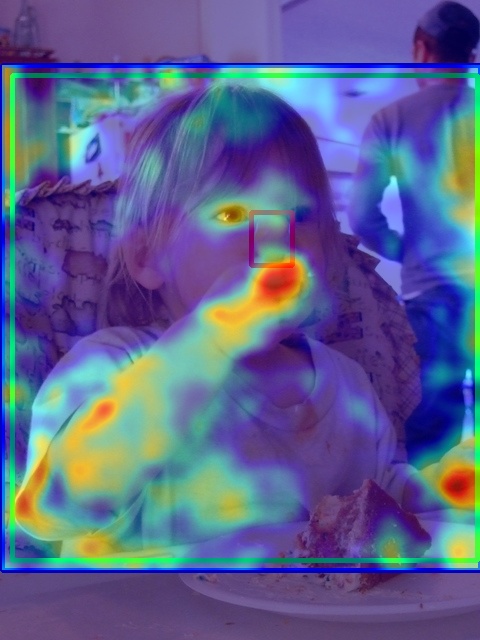}&
 \includegraphics[width=.2\textwidth]{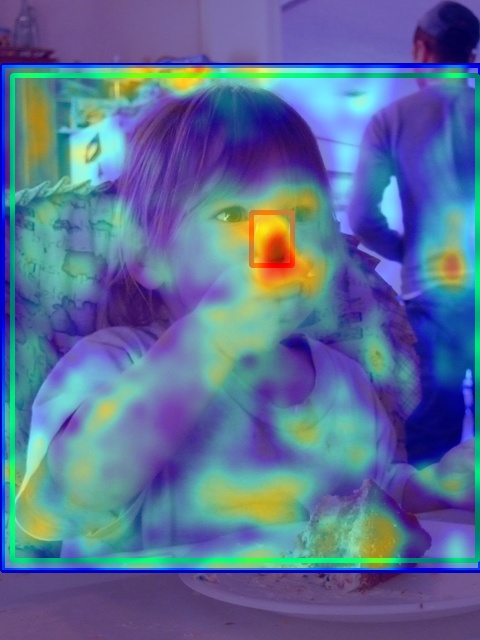}\\
(a) & (b) & (c) & (d) & (e) \\
\end{tabular}
\vspace{-8pt}
\caption{Grad-CAM visualization. \textbf{First row:} visualization of $ride$ from the \textbf{union-box} features. \textbf{Second row:} visualization of $hold$ from the \textbf{backbone} features before the ROI Align module (we only keep the visualization inside the union box). \textbf{Green box:} human. \textbf{Red box:} object. \textbf{Blue box:} union box of object and human with a margin. (a) input images. (b) baseline. (c) ADG-KLD. (d) CADG-KLD. (e) CADG-JSD. }
\vspace{-5pt}
\label{fig:gradcam}
\end{figure*} 

\noindent\textbf{Grad-CAM visualization}
Fig.~\ref{fig:gradcam} shows the visualization of intermediate features extracted from the input image, following Grad-CAM~\cite{selvaraju2017grad}, which takes a weighted average of the feature map w.r.t the gradient of the ground truth one-hot vector. The first row visualizes the feature map from the union-box branch and the second row visualizes the features before the ROI Align module. We note that the baseline features attends to the wrong region, while CADG-KLD and CADG-JSD pay more attention to the region that contains possible interaction between human and the object. The unconditional ADG-KLD gives a reverse saliency map comparing with CADG-KLD and CADG-JSD, which implies that ADG-KLD cannot learn good features on the union-box branch. This is the same as what we observe from the ablation study. On the other hand, as the second row shows, while the baseline feature focuses more on the eyes, all our models attend more to the interaction region between the hand and the folk, which demonstrates the effectiveness of the proposed domain generalization approaches.

To sum up, we show that our proposed ADG framework can get uniformly significant improvement over the baselines. Further analysis shows that, while the unconditional ADG indirectly optimize the human and spatial branch, the conditional ADG methods can improve the union-box features directly.

\vspace{-3mm}
\section{Conclusion}
\vspace{-2mm}
\noindent In this paper, we have focused on the problem of novel human-object interaction detection where the triplet combinations in the test set are unseen during training. To evaluate the performance in this setting, we created a new split based on the HICO-DET dataset and another evaluation set from the UnRel dataset. We proposed a unified adversarial domain generalization framework to tackle this problem. Experiments showed that our framework achieved significant improvement over the baseline models, by up to 50\% on the new split of HICO-DET test set and up to 125\% on the UnRel dataset. 
Our work shows that adversarial domain generalization is a promising way to overcome the combinatorial prediction problem in real-world applications.


\section*{Acknowledgement}
We thank Shuxi Zeng and Xiaodong Liu for valuable discussions and comments.

%
%
\bibliographystyle{splncs04}
\bibliography{reference.bib}

\appendix
\newpage
\section{Creating the new split of the HICO-DET dataset}
\label{app:suppnewsplitcreation}
We first combine the original training and test sets of the HICO-DET dataset to merge all the annotations together. As the original images in the HICO-DET are annotated at the HOI level, we merge the same pair of annotated boxes with IoU $\geq$ 0.7. We then follow two main principles to split the full dataset: the training set and test set should not have overlapping <human, predicate, object> triplet categories; the training set and test set should not have overlapping images. For each specific predicate, we split the objects related to this predicate into 90\% and 10\% for training set and test set. 

After we get the training and test sets, we further create the validation sets from the training set. We first divide the training set into 8/9 and 1/9 for ``train+trainval'' and ``testval'' (for novel HOI validation), following the same procedure of creating the training and test sets. Then we split {\it i.i.d.} the ``train+trainval'' set into 7/8 and 1/8 for the ``train'' and ``trainval'' sets, where the ``trainval'' set is for in-domain HOI validation (HOIs seen in the train set). We do the model selection based on the ``trainval''+``testval'' set, which gives us a good trade-off between in-domain and out-of-domain generalization. 


\section{Evaluation metrics}
\label{app:evaluation_metrics}
The most commonly used evaluation metric in HOI detection is the role mean average precision (role mAP)~\cite{gupta2015visual}, which takes the image as input and predicts each instance in the format of <human box, object box, HOI class>. If the correct HOI class is predicted, and both the human and the object bounding boxes meet the condition of IoUs $\geq$ 0.5 w.r.t the ground truth annotations, the instance is marked as a true positive. However, we don't use this metric for two reasons. First, this metric evaluates two steps, object detection and predicate detection, but we want to focus on the predicate prediction problem. Second, the HOI class is defined for the combination of predicate and object labels, which cannot be applied to novel HOI detection. We would like to disentangle the HOI class into predicate class and object class in order to generalize the metrics to unseen triplet categories.

In HICO-DET dataset, the only evaluation metric is the role mAP, but there are also other evaluation metrics that are commonly used in visual relationship detection and scene graph generation. Scene graph generation mostly uses predicate classification, scene graph classification, and scene graph generation as its metrics, and visual relationship detection uses predicate detection, phrase detection, and relation detection. Because of the incomplete annotations in most of the datasets on the relationships between different objects, we only use recall instead of precision for evaluation, such as $R@20$, $R@50$, and $R@100$. There is a fundamental difference between these two tasks: graph constraints. In scene graph generation, each edge of object pairs could only have one relation prediction, while in visual relationship detection, there is no graph constraints, which means that each edge could have multiple relation predictions. In our setting of HOI detection, there could be multiple predicate labels between the same human-object pair, so it is more suitable to use evaluation metrics in the visual relationship detection setting. Other differences are listed as follows:

In scene graph generation, the metrics are defined as:
\begin{itemize}[leftmargin=*]
    \item \textbf{Predicate Classification}: Given images, ground truth boxes, and object labels, we need to predict the predicate labels. The model should be able to judge whether two objects have relationships or not and make the predictions with graph constraints.

    \item \textbf{Scene Graph Classification}: Given images and ground truth boxes, we need to predict the object labels and the predicate labels. Usually the models need to predict the object models and the predicate labels simultaneously.

    \item \textbf{Scene Graph Generation}: Given only the images, we need to predict the ground truth boxes, the object labels, and the predicate labels with the requirement of graph constraints.
\end{itemize}

In visual relationship detection, the metrics are defined as:
\begin{itemize}[leftmargin=*]
    \item \textbf{Predicate Detection}: Given images, ground truth boxes, object labels, and object pairs that have relationships in the annotations, we need to predict the predicate labels without the graph constraints.

    \item \textbf{Phrase Detection}: Given only the images, we need to predict the union boxes of two interacting objects and their predicate labels. This is designed in early papers and not used much recently.

    \item \textbf{Relation Detection}: Given only the images, we need to predict the ground truth boxes, the object labels, and the predicate labels without the graph constraints.
\end{itemize}

As we stated in Section~\ref{sec:problemformulation}, the problem of relationship detection usually contains two steps: object detection and predicate detection. In this paper, we mainly focus on predicate prediction problem to learn the object-invariant features and thus we provide the ground truth boxes as input to the model and attend to the generalization ability in the novel HOI detection problem.

Under this circumstances where we are given the ground truth boxes, as our task is more related to visual relationship detection which has no graph constraints, we use \textbf{predicate detection} (PredDet) as one of our metrics. In visual relationship detection, people usually use $R@50$ and $R@100$ for evaluation. But different from the Visual Genome dataset or some other datasets, there are on average 3.1 relationships and relatively few object instances in the HICO-DET dataset. Therefore, we use $R@5$, $R@10$ for evaluation. We also introduce another metric which is inspired by the classification task and the image retrieval task. For each given human-object pair, we predict its predicate label based on the ranking result of all predicates. This is defined as \textbf{instance predicate detection} (PredCls), and we use $R@1$ and $R@5$ for evaluation.

For the UnRel dataset, in most cases each image only has one relationship pair, where using PredDet doesn't make much sense comparing with PredCls. Therefore, we only use PredCls for evalation on UnRel dataset which also uses $R@1$ and $R@5$.

\section{More experiment results}\label{app:moreexperiments}

\noindent\textbf{Model selection} We select the models and make hyper-parameter tuning on the validation set, as described in Section~\ref{app:suppnewsplitcreation}. The experiment results are listed in Table~\ref{tb:newsplitval}. While DeepC drops 1.2\% on the validation set, our proposed methods could get improvements of around 10\%, and achieve more significant performance on the test set, as illustrated in the main paper.

\begin{table*}[ht]
\footnotesize
\setlength{\tabcolsep}{2.5pt} 
\center
\resizebox{1.\linewidth}{!}{
  \begin{tabular}{c l c c c c c c c c c c c c c}
   \toprule
  & \multicolumn{1}{c}{}     
         & \multicolumn{4}{c}{trainval}        
         & \multicolumn{4}{c}{testval}     
         & \multicolumn{4}{c}{test}    
         & \multicolumn{1}{c}{val} \\
  \cmidrule(r){3-6}
  \cmidrule(r){7-10}
  \cmidrule(r){11-14}
  \cmidrule(r){15-15}
  & \multicolumn{1}{c}{}     
         & \multicolumn{2}{c}{PredCls}        
         & \multicolumn{2}{c}{PredDet}     
         & \multicolumn{2}{c}{PredCls}   
         & \multicolumn{2}{c}{PredDet}  
         & \multicolumn{2}{c}{PredCls}  
         & \multicolumn{2}{c}{PredDet}  
         & \multicolumn{1}{c}{PredCls}    \\

   \cmidrule(r){3-4}
   \cmidrule(r){5-6}
   \cmidrule(r){7-8}
   \cmidrule(r){9-10}
   \cmidrule(r){11-12}
   \cmidrule(r){13-14}
   \cmidrule(r){15-15}
 & Method  & R@1 & R@5 & R@5 & R@10  & R@1  & R@5  &  R@5  & R@10  &  R@1  & R@5 &  R@5  & R@10 & R@1   \\
    \cmidrule(r){2-15}
   \multirow{3}{*}
 & Frequency & 43.04 & 95.25 & 54.23 & 69.58 & 0.00 & 1.93 & 0.25 & 2.49 & 0.00 & 0.15 & 0.12 & 2.43 & 25.51  \\
 & Baseline & 41.87 & 91.09 & 51.17 & 66.92 & 32.03 & 75.95 & 52.96 & 68.88 & 32.18 & 76.73 & 51.92 & 67.45 & 37.86 \\
 & DeepC~\cite{li2018deep} & 40.58 (\textbf{-3.1\%}) & 90.41 & 50.21 & 65.85 & 32.82 (\textbf{+2.5\%}) & 76.92 & 53.89 & 69.54 & 32.80 (\textbf{+1.9\%}) & 77.62 & 52.33 & 67.95 & 37.42 (\textbf{-1.2\%}) \\
 \cmidrule(r){2-15}
   \multirow{3}{*}
 & ADG-KLD & 40.08 (\textbf{-4.3\%}) & 89.85 & 49.72 & 65.94 & 46.78 (\textbf{+46.1\%}) & 80.27 & 61.05 & 74.48 & 48.68 (\textbf{+51.3\%}) & 81.61 & 60.98 & 74.34 & 42.81 (\textbf{+13.1\%}) \\
 & CADG-KLD & 39.92 (\textbf{-4.7\%}) & 88.09 & 49.07 & 64.66 & 40.33 (\textbf{+25.9\%}) & 75.85 & 55.04 & 69.43 & 41.66 (\textbf{+29.5\%}) & 76.89 & 54.88 & 68.96 & 40.09 (\textbf{+6.9\%}) \\
 & CADG-JSD & 40.15 (\textbf{-4.1\%}) & 88.48 & 50.12 & 65.38 & 42.29 (\textbf{+32.0\%}) & 76.60 & 56.10 & 69.68 & 43.47 (\textbf{+35.1\%}) & 77.55 & 56.08 & 69.26 & 41.03 (\textbf{+8.4\%})  \\
\bottomrule
\end{tabular}
}
\vspace{0mm}
\caption{Performance on the new split of HICO-DET dataset. Comparing with Table~\ref{tb:newsplit}, we add the last column of the metric on val which shows our model selection criterion.}
\label{tb:newsplitval}
\vspace{-3mm}
\end{table*}

\noindent\textbf{Experiments on different insertion positions of the adversarial branch}
In the main paper, we introduce our proposed framework where the adversarial branch is inserted before the last classification layer of the union-box branch. On the other hand, there may be multiple positions to add the adversarial branch, and we show another possible position for adversarial branch. By this design, we add the adversarial branch to the position which is directly before two FC layers and one classification layer. Meanwhile, the network structure of the adversarial branch is symmetric with the main branch, i.e. including two FC layers and one classification layer. From the experiment results shown in Table~\ref{tb:newsplithico_p1s1}, we note that our proposed models ADG-KLD and CADG-JSD only get comparable performance as DeepC~\cite{li2018deep}. Although CADG-KLD has better performance on the test set, it drops 11.0\% on the trainval set, which is a big sacrifice on the common triplets and doesn't get a good tradeoff between common triplets and novel triplets. Therefore, it shows that the proposed domain generalization models can get better training when inserting the adversarial branch before the last classification layer. 

\begin{table*}[ht]
\footnotesize
\setlength{\tabcolsep}{2.5pt} 
\center
\resizebox{1.\linewidth}{!}{
  \begin{tabular}{c l c c c c c c c c c c c c c}
   \toprule
  & \multicolumn{1}{c}{}     
         & \multicolumn{4}{c}{trainval}        
         & \multicolumn{4}{c}{testval}     
         & \multicolumn{4}{c}{test}    
         & \multicolumn{1}{c}{val} \\
  \cmidrule(r){3-6}
  \cmidrule(r){7-10}
  \cmidrule(r){11-14}
  \cmidrule(r){15-15}
  & \multicolumn{1}{c}{}     
         & \multicolumn{2}{c}{PredCls}        
         & \multicolumn{2}{c}{PredDet}     
         & \multicolumn{2}{c}{PredCls}   
         & \multicolumn{2}{c}{PredDet}  
         & \multicolumn{2}{c}{PredCls}  
         & \multicolumn{2}{c}{PredDet}  
         & \multicolumn{1}{c}{PredCls}    \\

   \cmidrule(r){3-4}
   \cmidrule(r){5-6}
   \cmidrule(r){7-8}
   \cmidrule(r){9-10}
   \cmidrule(r){11-12}
   \cmidrule(r){13-14}
   \cmidrule(r){15-15}
 & Method  & R@1 & R@5 & R@5 & R@10  & R@1  & R@5  &  R@5  & R@10  &  R@1  & R@5 &  R@5  & R@10 & R@1   \\
    \cmidrule(r){2-15}
   \multirow{3}{*}
 & Frequency & 43.04 & 95.25 & 54.23 & 69.58 & 0.00 & 1.93 & 0.25 & 2.49 & 0.00 & 0.15 & 0.12 & 2.43 & 25.51  \\
 & Baseline & 41.87 & 91.09 & 51.17 & 66.92 & 32.03 & 75.95 & 52.96 & 68.88 & 32.18 & 76.73 & 51.92 & 67.45 & 37.86 \\
 & DeepC~\cite{li2018deep} & 40.58 (\textbf{-3.1\%}) & 90.41 & 50.21 & 65.85 & 32.82 (\textbf{+2.5\%}) & 76.92 & 53.89 & 69.54 & 32.80 (\textbf{+1.9\%}) & 77.62 & 52.33 & 67.95 & 37.42 (\textbf{-1.2\%}) \\
 \cmidrule(r){2-15}
   \multirow{3}{*}
 & ADG-KLD & 40.39 (\textbf{-3.5\%}) & 89.43 & 50.02 & 65.81 & 32.18 (\textbf{+0.5\%}) & 75.47 & 52.42 & 68.51 & 32.60 (\textbf{+1.3\%}) & 76.14 & 51.87 & 66.86 & 37.05 (\textbf{-2.1\%}) \\
 & CADG-KLD & 37.25 (\textbf{-11.0\%}) & 86.44 & 46.53 & 62.52 & 37.62 (\textbf{+17.5\%}) & 77.49 & 55.04 & 71.89 & 37.93 (\textbf{+17.9\%}) & 78.73 & 54.49 & 70.34 & 37.40 (\textbf{-1.2\%}) \\
 & CADG-JSD & 41.34 (\textbf{-1.1\%}) & 90.05 & 50.37 & 66.10 & 32.57 (\textbf{+1.7\%}) & 76.73 & 53.65 & 69.31 & 33.45 (\textbf{+3.9\%}) & 77.62 & 53.43 & 68.29 & 37.77 (\textbf{-0.2\%})  \\
\bottomrule
\end{tabular}
}
\vspace{0mm}
\caption{Performance of the early insertion of the adversarial branch on HICO-DET dataset.}
\label{tb:newsplithico_p1s1}
\vspace{-3mm}
\end{table*}

\section{Proofs and additional derivations}
\label{app:proofs}
In this section, we provide proofs for our algorithms, showing that they are indeed minimizing the divergences we claimed in Section~\ref{subsec:uncondADG} and \ref{sec:cadvda}.We also provide additional derivations on how we simplify the populational loss to the corresponding minibatch loss used in Stochastic Gradient Descent (SGD).

\begin{thm}\label{thm:adg_kld}
Define the pooled feature distribution $P(f_{u})$ as
\begin{equation}\label{eqn:advdadist_app}
	P(f_{u}) = \sum_{i\in [M]} \alpha_{i} P(f_{u}| obj_i), \quad \sum_{i\in [M]} \alpha_{i} = 1
\end{equation} 
and $\alpha_{i}$ is the relative importance of each domain. We introduce a discriminator $D: f_{u} \to [0,1]^{M}$ that tries to classify the domain (object category) based on the union-box feature. Mathematically, the discriminator is trying to maximize the likelihood of the ground-truth domain: 
\begin{equation}\label{eqn:unconditionDA_Di_app}
    \max_{D} \sum_{i\in [M]} \alpha_{i} E_{f \in P(f_{u}| obj_i)}[\log D_i(f)].
\end{equation}
Assume infinite capacity of the discriminator $D$, the maximum of \eqref{eqn:unconditionDA_Di_app} is a weighted summation of KL divergence between distributions (up to a constant):
\begin{equation}\label{eqn:advdakldiv_app}
    KLD := \sum_{i\in [M]} \alpha_{i} KL(P(f_{u}| obj_i) || P(f_{u})).
\end{equation} 
\end{thm}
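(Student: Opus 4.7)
The plan is to follow the classical GAN-style pointwise maximization argument of Goodfellow et al., adapted to an $M$-way classifier instead of a binary one. The objective in \eqref{eqn:unconditionDA_Di_app} can be rewritten as a single integral over $f$:
\begin{equation*}
L(D) = \int \Bigl( \sum_{i\in [M]} \alpha_i P(f|obj_i) \Bigr) \cdot \Bigl( \sum_{i\in [M]} \frac{\alpha_i P(f|obj_i)}{\sum_j \alpha_j P(f|obj_j)} \log D_i(f) \Bigr) df,
\end{equation*}
up to rearrangement. The key observation is that the discriminator $D_i(f)$ is constrained to lie on the probability simplex (i.e., $\sum_{i} D_i(f)=1$ and $D_i(f)\ge 0$) pointwise in $f$, so the optimization decouples across $f$.

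First I would fix an arbitrary $f$ and maximize $\sum_i a_i \log q_i$ over the simplex $\{q\in\mathbb{R}^M_{\ge 0}:\sum_i q_i=1\}$ with coefficients $a_i := \alpha_i P(f|obj_i)\ge 0$. By a standard Lagrange multiplier calculation (or by recognizing this as the negative cross-entropy, minimized by matching the normalized target), the unique maximizer is
\begin{equation*}
D_i^\star(f) = \frac{\alpha_i P(f|obj_i)}{\sum_{j\in[M]} \alpha_j P(f|obj_j)} = \frac{\alpha_i P(f|obj_i)}{P(f_u)},
\end{equation*}
using the definition of the pooled distribution in \eqref{eqn:advdadist_app}.

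Next I would substitute $D^\star$ back into \eqref{eqn:unconditionDA_Di_app} and split the logarithm:
\begin{equation*}
L(D^\star) = \sum_{i\in[M]} \alpha_i \int P(f|obj_i)\, \log\frac{\alpha_i P(f|obj_i)}{P(f_u)}\, df = \sum_{i\in[M]} \alpha_i \log \alpha_i + \sum_{i\in[M]} \alpha_i \, KL\bigl(P(f_u|obj_i)\,\|\,P(f_u)\bigr).
\end{equation*}
The first summand is $-H(\alpha)$, a constant independent of the feature extractor $F$, while the second is exactly the $KLD$ in \eqref{eqn:advdakldiv_app}. This proves the claim and justifies why the adversarial minimax training in \eqref{eqn:unconditionDA_Dierm_sgd} effectively regularizes $F$ by the DG penalty $KLD$.

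I do not expect a real obstacle here: the only subtlety is verifying that the pointwise optimization indeed produces a measurable discriminator $D^\star$, which follows since $D^\star$ is a ratio of measurable functions with a strictly positive denominator (wherever $P(f_u)>0$; on the null set where $P(f_u)=0$ the value of $D$ is immaterial because all conditionals also vanish there). The SGD reduction noted after \eqref{eqn:unconditionDA_Dierm_sgd} is a separate observation: sampling $x$ from the mixture with weights $\alpha_i = N_i/N$ and using the single-sample loss $\log D_{obj(x)}(F(x))$ gives an unbiased stochastic estimator of \eqref{eqn:unconditionDA_Di_app}, so the same fixed-point analysis applies in the population limit.
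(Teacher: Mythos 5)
Your proposal is correct and follows essentially the same route as the paper's proof: a pointwise Lagrange-multiplier (cross-entropy) maximization over the simplex yielding $D_i^\star(f) = \alpha_i P(f|obj_i)/P(f_u)$, followed by substitution and splitting the logarithm into the constant $\sum_i \alpha_i\log\alpha_i$ plus the claimed $KLD$. Your added remarks on measurability of $D^\star$ and the unbiasedness of the single-sample SGD loss are minor refinements the paper omits but do not change the argument.
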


\begin{proof}
Let $D^*$ be an optimal discriminator, i.e.,
\begin{equation*}
\begin{aligned}
    D^* &= \argmax_{D} \sum_{i\in [M]} \alpha_{i} E_{f \in P(f_{u}| obj_i)}[\log D_i(f)] \\
    &= \argmax_{D} \sum_{i\in [M]} \alpha_{i} \int_f P(f| obj_i) \log D_i(f).
\end{aligned}
\end{equation*}
Since $D$ is a multi-class classifier, the above optimization has an implicit constraint $\sum_{i\in[M]} D_i(f) = 1$ for any $f$. Thanks to the infinite capacity of the discriminator $D$, we can maximize the value function pointwisely (for each $f$), and obtain a close form solution:
\begin{equation}\label{eqn:unconditionDA_Di_app_opt}
\begin{aligned}
    D_i^*(f) &= \alpha_{i} P(f| obj_i) / \sum_{i\in [M]} \alpha_{i} P(f| obj_i) \\
    &= \alpha_{i} P(f| obj_i)/P(f).
\end{aligned}
\end{equation}
The second equation is by definition~\eqref{eqn:advdadist_app}. The first equation can be obtained by the method of Lagrange multipliers, i.e.,
\begin{equation*}
\begin{aligned}
    D^* = \argmax_{D} &\sum_{i\in [M]} \alpha_{i} \int_f P(f| obj_i) \log D_i(f) \\
    & + \lambda(\sum_{i\in[M]} D_i(f) = 1).
\end{aligned}
\end{equation*}
Setting the derivative of the above equation w.r.t. $D_i(f)$ to zero, we obtain $D_i^*(f) = -\alpha_{i} P(f| obj_i)/\lambda$. Setting the derivative of $\lambda$ to zero, we obtain $\lambda = - \sum_{i\in[M]} \alpha_{i} P(f| obj_i)$. Therefore, we prove the optimal solution $D^*$ in \eqref{eqn:unconditionDA_Di_app_opt}.

Plugging \eqref{eqn:unconditionDA_Di_app_opt} into \eqref{eqn:unconditionDA_Di_app}, we have the maximum of \eqref{eqn:unconditionDA_Di_app} as
\begin{equation*}
\begin{aligned}
    & \sum_{i\in[M]} \alpha_{i} E_{f \in P(f_{u}| obj_i)}[\log \frac{\alpha_{i} P(f| obj_i)}{P(f)}] \\
    & = \sum_{i\in[M]} \alpha_i \log \alpha_i + \alpha_{i} KL(P(f_{u}| obj_i) || P(f)) \\
    & = KLD + \sum_{i\in[M]} \alpha_i \log \alpha_i.
\end{aligned}
\end{equation*}
Therefore, we proved that the maximum of \eqref{eqn:unconditionDA_Di_app} is the KLD plus a constant.
\end{proof}

\paragraph{From population loss to minibatch loss} The empirical loss of \eqref{eqn:unconditionDA_Di_app} is
\begin{equation}\label{eqn:unconditionDA_Dierm_app}
	\max_{D} \sum_{i\in [M]} \frac{\alpha_i}{N_i} \sum_{j=1}^{N_i} \log D_i(F(x_{j|i})),
\end{equation}
where $x_{j|i}$ denotes the $j$'th sample in domain $i$. In practice, we optimize this with SGD. For each sample $x$ (with domain $obj(x)$), we train $D$ with the following minibatch loss:
\begin{equation}\label{eqn:unconditionDA_Dierm_sgd_app}
\max_{D} \frac{N \alpha_i}{N_i} \log D_{obj(x)}(F(x)) = \log D_{obj(x)}(F(x)),
\end{equation}
where $\alpha_i = N_i/N$ (the recommended weight) is used in the last equality. Other kinds of weighting can be applied, too.
Finally, the feature extractor is updated by 
\begin{equation*}
\min_{F} \log D_{obj(x)}(F(x)),
\end{equation*}
as we depicted in \eqref{eqn:unconditionDA_Dierm_sgd} in Section~\ref{subsec:uncondADG}.

\begin{thm}\label{thm:cadg_kld}
Define the class-specific pooled feature distribution $P(f_{u}|pred_k)$ as
\begin{equation}\label{eqn:cadvdadist_app}
\begin{aligned}
P(f_{u}| pred_k) = \sum_{i\in [M]} \alpha_{i}^{(k)} P(f_{u}| obj_i, pred_k)
\end{aligned}
\end{equation} 
and $\alpha_{i}^{(k)}$ is the relative importance of each domain. We introduce discriminators specific to each class, i.e., $D(f; pred_k) \to [0,1]^{M}$ that tries to classify the domain (object category) based on the union-box feature and the predicate class. Mathematically, the discriminator is trying to maximize the likelihood of the ground-truth domain: 
\begin{equation}\label{eqn:cadvdakldiv_Di_app}
\begin{aligned}
& \max_{D} \sum_{k\in[K]} \alpha^{(k)} \sum_{i\in [M]} \alpha_{i}^{(k)} \\
& E_{f \in P(f_{u}| obj_i, pred_k)}[\log D_i(f; pred_k)] .
\end{aligned}
\end{equation}
Assume infinite capacity of the discriminator $D$, the maximum of \eqref{eqn:cadvdakldiv_Di_app} is a weighted summation of KL divergence between distributions (up to a constant):
\begin{equation}\label{eqn:cadvdakldiv_app}
\begin{aligned}
    CKLD :=& \sum_{k\in [K]} \alpha^{(k)} \sum_{i\in [M]} \alpha_{i}^{(k)} \\
    & KL(P(f_{u}| obj_i, pred_k) || P(f_{u}|pred_k)).
\end{aligned}
\end{equation} 
\end{thm}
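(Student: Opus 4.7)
The plan is to reduce Theorem \ref{thm:cadg_kld} to the already-proven Theorem \ref{thm:adg_kld} by conditioning on each predicate class $pred_k$ separately. For each fixed $k$, the inner objective
\[
\sum_{i\in [M]} \alpha_{i}^{(k)}\, E_{f \in P(f_{u}| obj_i, pred_k)}[\log D_i(f; pred_k)]
\]
has exactly the same form as the unconditional minimax value in \eqref{eqn:unconditionDA_Di_app}, with $\alpha_i^{(k)}$ playing the role of $\alpha_i$ and $P(f_u | obj_i, pred_k)$ playing the role of $P(f_u | obj_i)$. Because the family $D(\cdot ; pred_k)$ can be chosen independently for different values of $k$ (the discriminator is conditioned on $pred_k$ and has infinite capacity), the outer weighted sum over $k$ decouples, and maximizing the full objective over $D$ amounts to maximizing each $k$-slice separately.

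I would then invoke Theorem \ref{thm:adg_kld} slice by slice. For each fixed $k$, the optimal conditional discriminator is
\[
D_i^*(f; pred_k) = \frac{\alpha_i^{(k)}\, P(f \mid obj_i, pred_k)}{P(f \mid pred_k)},
\]
with the denominator being the pooled distribution defined in \eqref{eqn:cadvdadist_app}. Substituting back and performing the same log-ratio manipulation as in the proof of Theorem \ref{thm:adg_kld} yields, for each $k$, the sum $\sum_i \alpha_i^{(k)} KL\bigl(P(f_u \mid obj_i, pred_k)\,\|\,P(f_u \mid pred_k)\bigr) + \sum_i \alpha_i^{(k)} \log \alpha_i^{(k)}$. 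Multiplying by $\alpha^{(k)}$ and summing over $k$ recovers $CKLD$ as defined in \eqref{eqn:cadvdakldiv_app}, plus the additive constant $\sum_k \alpha^{(k)} \sum_i \alpha_i^{(k)} \log \alpha_i^{(k)}$ which depends only on the (fixed) weighting, not on $F$ or $D$.

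The main obstacle is bookkeeping rather than mathematics: care must be taken at $(i,k)$ pairs for which $\alpha_i^{(k)} = 0$, which occurs pervasively in the novel HOI setting because many predicate-object combinations never appear in training. For such pairs, the conditional $P(f_u \mid obj_i, pred_k)$ is undefined, but its contribution to every sum is zero under the standard convention $0 \log 0 = 0$, so these terms can simply be omitted throughout the derivation. This same degeneracy is precisely what obstructs the CIDDG-style reweighting by $1/\alpha_i^{(k)}$ mentioned in Section \ref{subsec:ourvsdeepc}, while leaving the present $CKLD$ argument intact.
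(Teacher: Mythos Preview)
Your proposal is correct and matches the paper's approach exactly: the paper's entire proof of Theorem~\ref{thm:cadg_kld} is the single sentence ``The proof is nearly the same with the proof of Theorem~\ref{thm:adg_kld}, so we omit the proof here,'' and your reduction---fixing $k$, observing that the $D(\cdot;pred_k)$ slices decouple under infinite capacity, and invoking Theorem~\ref{thm:adg_kld} per slice---is precisely the argument that sentence points to. Your additional remark on the $\alpha_i^{(k)}=0$ degeneracy is a welcome clarification that the paper does not spell out.
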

The proof is nearly the same with the proof of Theorem~\eqref{thm:adg_kld}, so we omit the proof here.

\paragraph{From population loss to minibatch loss} The empirical loss of \eqref{eqn:cadvdakldiv_Di_app} is:
\begin{equation}\label{eqn:cadvdakldiv_Derm_app}
\max_{D} \sum_{k\in[K]} \sum_{i\in [M]} \frac{\alpha^{(k)} \alpha_{i}^{(k)}}{N_{i}^{(k)}} \sum_{j=1}^{N_{i}^{(k)}} \log D_i(f(x_{j|i,k}); pred_k),
\end{equation}
where $x_{j|i,k}$ denotes the $j$'th sample in domain $i$ with label $k$.
In this paper, we propose to use $\alpha^{(k)} = N^{(k)}/N$ and $\alpha_{i}^{(k)} = N_{i}^{(k)}/N^{(k)}$, so we have 
\begin{equation}\label{eqn:cadvdakldiv_Derm2}
\frac{1}{N}\max_{D} \sum_{k\in[K]} \sum_{i\in [M]} \sum_{j=1}^{N_{i}^{(k)}} \log D_i(f(x_{j|i,k}); pred_k) .
\end{equation}
In practice, we optimize this with SGD, in which for each sample $x$ (with label $pred(x)$ and domain $obj(x)$), we add the following regularization:
\begin{equation*}
\max_{D} \log D_{obj(x)}(f(x); pred(x)),
\end{equation*}
as we depicted in \eqref{eqn:cadvdakldiv_Derm_sgd} in Section~\ref{sec:cadvda}.

\begin{thm}\label{thm:cadg_jsd}
Define the class-specific pooled feature distribution $P(f_{u}|pred_k)$ as
\begin{equation}\label{eqn:cadvdadist_app2}
\begin{aligned}
P(f_{u}| pred_k) = \sum_{i\in [M]} \alpha_{i}^{(k)} P(f_{u}| obj_i, pred_k)
\end{aligned}
\end{equation} 
and $\alpha_{i}^{(k)}$ is the relative importance of each domain. We introduce discriminators specific to each class, i.e., $D(f; pred_k) \to [0,1]^{M}$ that tries to tell whether a feature $f$ is from the domain-conditioned distribution $P(f_{u}| obj_i, pred_k)$ or from the pooled distribution $P(f_{u}| pred_k)$. Mathematically, the discriminator (with binary output) is trying to maximize the following objective: 
\begin{equation}\label{eqn:cadvdajddiv_D_app}
\begin{aligned}
    & \max_{D} \sum_{k\in[K]} \alpha^{(k)} \sum_{i\in [M]} \alpha_{i}^{(k)} \\
    & \big( E_{f \in P(f_{u}| obj_i, pred_k)}[\log D(f, obj_i; pred_k)] \\
    & + E_{f \in P(f_{u}| pred_k)}[\log (1-D(f, obj_i; pred_k))] \big).
\end{aligned}
\end{equation}
Assume infinite capacity of the discriminator $D$, the maximum of \eqref{eqn:cadvdajddiv_D_app} is a weighted summation of Jensen-Shannon divergence (JSD) between distributions (up to a constant):
\begin{equation}\label{eqn:cadvdajsdiv_app}
\begin{aligned}
    CJSD :=& \sum_{k\in [K]} \alpha^{(k)} \sum_{i\in [M]} \alpha_{i}^{(k)} \\
    & JSD(P(f_{u}| obj_i, pred_k) || P(f_{u}|pred_k)).
\end{aligned}
\end{equation} 
\end{thm}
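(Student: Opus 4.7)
The plan is to adapt the classical pointwise maximization argument of Goodfellow et al.\ (2014) to our doubly-indexed, weighted setup, following a structure parallel to the proofs of Theorems~\ref{thm:adg_kld} and~\ref{thm:cadg_kld}. The key structural observation is that the inner objective in \eqref{eqn:cadvdajddiv_D_app} decouples across $(i,k)$: each discriminator output $D(\cdot, obj_i; pred_k)$ appears in exactly one summand, so the supremum over $D$ commutes with the double sum, and each $(i,k)$ subproblem can be solved in isolation.

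For fixed $(i,k)$, write $p(f) := P(f_u \mid obj_i, pred_k)$ and $q(f) := P(f_u \mid pred_k)$. The inner objective becomes $\int [p(f) \log D + q(f) \log(1-D)]\, df$, which is the standard binary-GAN value function. With infinite capacity of $D$, I would maximize pointwise by differentiating the integrand $a \log d + b \log(1-d)$ in $d \in (0,1)$, obtaining the closed-form optimum $D^*(f, obj_i; pred_k) = p(f)/(p(f) + q(f))$. This is the analogue of Equation~\eqref{eqn:unconditionDA_Di_app_opt} in the KLD proof, but now for a two-distribution discrimination problem instead of an $M$-way classification.

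Substituting $D^*$ back, I would introduce the mixture $m := (p+q)/2$, rewrite $\log(p/(p+q)) = \log(p/m) - \log 2$ and $\log(q/(p+q)) = \log(q/m) - \log 2$, and obtain the familiar identity that the inner maximum equals $KL(p\|m) + KL(q\|m) - 2\log 2 = 2\, JSD(p \,\|\, q) - 2\log 2$. Multiplying by $\alpha^{(k)} \alpha_i^{(k)}$ and summing, the constant term collapses using $\sum_k \alpha^{(k)} = 1$ and $\sum_i \alpha_i^{(k)} = 1$, yielding $2 \cdot CJSD - 2\log 2$, which matches $CJSD$ from \eqref{eqn:cadvdajsdiv_app} up to a positive multiplicative factor and an additive constant (both independent of the feature extractor $F$), as claimed.

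The argument is essentially bookkeeping; the only step that needs a brief justification is the decoupling across $(i,k)$, which is immediate since the discriminator $D(f, obj_i; pred_k)$ is indexed by both the domain label $obj_i$ and the class label $pred_k$, so each of the $MK$ subproblems has its own independent decision variable. Once Theorem~\ref{thm:cadg_jsd} is in place, the derivation of the minibatch SGD surrogate \eqref{eqn:cadvdajddiv_Derm_sgd} from the population loss can be carried out exactly as in the CADG-KLD case: substituting $\alpha^{(k)} = N^{(k)}/N$ and $\alpha_i^{(k)} = N_i^{(k)}/N^{(k)}$ cancels the empirical normalizers so that a single minibatch sample $x$ contributes the first line of \eqref{eqn:cadvdajddiv_Derm_sgd}, while the pooled expectation over $P(f_u \mid pred(x))$ is estimated by the weighted sum over domains in the second line.
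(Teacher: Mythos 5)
Your proposal is correct and follows essentially the same route as the paper: the paper's proof simply invokes the standard GAN argument of Goodfellow et al.\ to conclude that each $(i,k)$ term attains the maximum $2\,JSD\bigl(P(f_u\mid obj_i, pred_k)\,\|\,P(f_u\mid pred_k)\bigr)-\log 4$, yielding $2\cdot CJSD$ up to a constant, while you spell out the decoupling over $(i,k)$, the pointwise optimum $D^*=p/(p+q)$, and the mixture identity explicitly. The extra detail is harmless and consistent with the paper's cited argument; note only that the paper leaves the additive constant as $-\log(4)\sum_k\alpha^{(k)}$ rather than invoking $\sum_k\alpha^{(k)}=1$, but this makes no difference since the term is independent of $F$.
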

\begin{proof}
From the standard GAN proof, see, e.g., \cite{goodfellow2014generative}, assuming infinite capacity of the discriminator $D$, the maximum of \eqref{eqn:cadvdajddiv_D_app} is
\begin{equation*}
\begin{aligned}
    & \sum_{k\in[K]} \alpha^{(k)} \sum_{i\in [M]} \alpha_{i}^{(k)} \big( -\log(4) + \\
    & 2 * JSD(P(f_{ubox}| obj_i, pred_k) || P(f_{ubox}| pred_k)) \big) \\
    & = 2 * CJSD - log(4) \sum_{k\in[K]} \alpha^{(k)}.
\end{aligned}
\end{equation*}
\end{proof}

\paragraph{From population loss to minibatch loss} The empirical loss of \eqref{eqn:cadvdajddiv_D_app} is:
\begin{equation}\label{eqn:cadvdajddiv_Derm_app}
\begin{aligned}
    & \max_{D} \sum_{k\in[K]} \sum_{i\in [M]} \big( \frac{\alpha^{(k)} \alpha_{i}^{(k)}}{N_{i}^{(k)}} \sum_{j=1}^{N_{i}^{(k)}} \log D(f(x_{j|i,k}), obj_i; pred_k) \\
    & + \frac{\alpha^{(k)} \alpha_{i}^{(k)}}{N^{(k)}} \sum_{j=1}^{N^{(k)}} \log (1-D(f(x_{j|k}), obj_i; pred_k)) \big),
\end{aligned}
\end{equation}
where $x_{j|i,k}$ denotes the $j$'th sample in domain $i$ with label $k$ and $x_{j|k}$ denotes the $j$'th sample with label $k$.
In this paper, we propose to use $\alpha^{(k)} = N^{(k)}/N$ and $\alpha_{i}^{(k)} = N_{i}^{(k)}/N^{(k)}$, so we have 
\begin{equation}\label{eqn:cadvdajddiv_Derm2_app}
\begin{aligned}
    & \frac{1}{N}\max_{D} \sum_{k\in[K]} \sum_{i\in [M]} \big( \sum_{j=1}^{N_{i}^{(k)}} \log D(f(x_{j|i,k}), obj_i; pred_k) \\
    & + \frac{N_{i}^{(k)}}{N^{(k)}} \sum_{j=1}^{N^{(k)}} \log (1-D(f(x_{j|k}), obj_i; pred_k)) \big).
\end{aligned}
\end{equation}
In practice, we optimize this with SGD, in which for each sample $x$ (with label $pred(x)$ and domain $obj(x)$), we add the following regularization:
\begin{equation*}
\begin{aligned}
    & \max_{D} ~ \log D(f(x), obj(x); pred(x)) + \\
    & \sum_{i\in [O]} \frac{N_{i}^{pred(x)}}{N^{pred(x)}} \log (1-D(f(x), obj_i; pred(x))),
\end{aligned}
\end{equation*}
as we depicted in \eqref{eqn:cadvdajddiv_Derm_sgd} in Section~\ref{sec:cadvda}.

\section{Network architectures}
We present the network architectures for the baseline model and our proposed methods in Table~\ref{tab:detailed_architecture}, where the models are built with basic blocks defined in Table~\ref{tab:basic_blocks}. As our models take the ResNet-50 architecture as our backbone module, we only list the network structures after the backbone module. Although our models have three branches, we only apply the proposed methods on the union-box branch, so we only list the detailed structures of the union-box branch for each model. As shown in Table~\ref{tab:detailed_architecture}, the baseline model contains the blocks of ROI Align, $P_0$, $P_1$, $P_2$, and $P_3$. The adversarial branch of ADG-KLD takes $fc4$ as input and predicts the object categories. The conditional adversarial branch of CADG-KLD takes both $fc4$ and the predicate embedding $emb1$ as inputs and predicts the object categories. The conditional adversarial branch of CADG-JSD takes $fc4$, the predicate embedding $emb1$, and the object embedding $emb2$ as inputs, and provides a binary predication.

\begin{table*}[h]
\footnotesize
\centering
\begin{tabular}[t]{|p{3.3cm}|p{9.0cm}|}\hline
{Name} &{\makebox[8.3cm]{Operations / Layers}}\\ \hline\hline
{Conv $1 \times 1$, stride=$1$} &Convolution $1 \times 1$ - ReLU, stride=$1$.\\\hline
{Linear} &Fully connected layer.\\\hline
{FC} &Fully connected layer - ReLU.\\\hline
{Multiply} & Multiplication of two tensors (with broadcasting).\\\hline
{Mean} & Take the average of the tensor along the channel dimension.\\\hline
{Avg Pool} & Take the average of the tensor along the spatial dimensions.\\\hline
{Concat} &Concatenate input tensors along the channel dimension.  \\ \hline
{ResBlock} &Standard ResNet blocks.  \\ \hline
{ROI Align} &Pooling feature maps for ROI.  \\\hline
{Embedding} &Word embedding of the given words.  \\\hline
{$F_{backbone}$} & Features extracted from the backbone module with height $N_h$ and width $N_w$.  \\\hline
\end{tabular}
\caption{\footnotesize The basic blocks for architecture design. (``-" connects two consecutive layers.)}
\label{tab:basic_blocks}
\end{table*}

\begin{table*}[h]
\footnotesize
\centering
\begin{tabular}[t]{|c|p{2.0cm}|c|c|}\hline
{Stage} &{Name} &{\makebox[2.0cm]{Input Tensors}} &{\makebox[2.0cm]{Output Tensors}}\\ \hline\hline
\multirow{1}{*}{ROI Align} & ROI Align & $N_h \times N_w \times 1024 (F_{backbone})$ & $14 \times 14 \times 1024$ \\\hline\hline
\multirow{4}{*}{$P_0$} & ResBlock & $14 \times 14 \times 1024$ & $7 \times 7 \times 1024$ \\\cline{2-4}
& ResBlock & $7 \times 7 \times 1024$ & $7 \times 7 \times 1024$ \\\cline{2-4}
& ResBlock & $7 \times 7 \times 1024$ & $7 \times 7 \times 1024$ \\\cline{2-4}
& Avg Pool & $7 \times 7 \times 1024$ & $1024 (fc1)$  \\\hline\hline
\multirow{9}{*}{$P_1$} & FC & $1024 (fc1)$  & $512 (fc2)$ \\\cline{2-4}
& Conv $1 \times 1$, stride=$1$ & $N_h \times N_w \times 1024 (F_{backbone})$ & $N_h \times N_w \times 512 (conv1)$ \\\cline{2-4}
& Conv $1 \times 1$, stride=$1$ & $N_h \times N_w \times 1024 (F_{backbone})$ & $N_h \times N_w \times 512 (conv2)$ \\\cline{2-4}
& Multiply & $512(fc2), N_h \times N_w \times 512 (conv1)$ ,  & $N_h \times N_w \times 512$ \\\cline{2-4}
& Mean & $N_h \times N_w \times 512$ ,  & $N_h \times N_w$ \\\cline{2-4}
& Multiply & $N_h \times N_w, N_h \times N_w \times 512 (conv2)$ ,  & $N_h \times N_w \times 512$ \\\cline{2-4}
& Conv $1 \times 1$, stride=$1$ & $N_h \times N_w \times 512$ & $N_h \times N_w \times 1024$ \\\cline{2-4}
& Avg Pool & $N_h \times N_w \times 1024$ & $1024 (fc3)$ \\\cline{2-4}
& Concat & $1024 (fc1), 1024 (fc3)$ & $2048$ \\\hline\hline
\multirow{2}{*}{$P_2$} & FC & $2048$  & $1024$ \\\cline{2-4}
& FC & $1024$  & $1024 (fc4)$ \\\hline\hline
\multirow{1}{*}{$P_3$} & Linear & $1024$  & $117$ \\\hline\hline
\multirow{2}{*}{$Emb$} & Embedding & $1$ (predicate label)  & $50 (emb1)$ \\\cline{2-4}
& Embedding & $1$ (object label)  & $50 (emb2)$ \\\hline\hline
\multirow{1}{*}{$ADG-KLD$} & Linear & $1024 (fc4)$  & $80$ \\\hline\hline
\multirow{2}{*}{$CADG-KLD$} & Concat & $1024 (fc4), 50 (emb1)$  & $1074$ \\\cline{2-4}
& Linear & $1074$  & $80$ \\\hline\hline
\multirow{2}{*}{$CADG-JSD$} & Concat & $1024 (fc4), 50 (emb1), 50 (emb2)$  & $1124$ \\\cline{2-4}
& Linear & $1124$  & $1$ \\\hline

\end{tabular}
\caption{\footnotesize The structures for the baseline model and our proposed methods ADG-KLD, CADG-KLD, and CADG-JSD..}
\label{tab:detailed_architecture}
\end{table*}

\noindent\textbf{Implementation details}
For the input images in both training and inference, we first resize them to set the larger size at 640. During training, we apply gradient clip to 1 when doing back-propagation and use early stopping to train the model with 4 NVIDIA P100 GPU, where we adjust the learning schedule according to the linear scaling rule as in \cite{goyal2017accurate}. Because we use the sigmoid loss function for each predicate category and the ratio of positive and negative samples are imbalanced, we set the ratio of positive and negative samples to 1:6. We've made hyper-parameter search on this ratio and find that the performance doesn't change a lot for different ratios, but it performs much better than the model without setting this ratio. 

After we make extensive experiments on the hyper-parameter search, we employ a SGD optimizer with learning rate 0.001, momentum 0.9 and weight decay 0.0005. For the proposed domain generalization framework, we train the mainstream branch and the adversarial branch each for once alternatively. 

\noindent\textbf{Baseline}
We set the learning rate decay at every 100,000 iterations with $gamma=0.96$. 

\noindent\textbf{ADG-KLD}
We set the learning rate decay at the 2,000,000-th iteration and 2,800,000-th iteration with $gamma=0.1$ and set $\lambda_{DG}=1$. 

\noindent\textbf{CADG-KLD}
We set the learning rate decay at the 2,000,000-th iteration and 2,800,000-th iteration with $gamma=0.1$ and set $\lambda_{DG}=100$. 

\noindent\textbf{CADG-JSD}
We set the learning rate decay at every 100,000 iterations with $gamma=0.96$ and set $\lambda_{DG}=100$. The learning rate for the adversarial branch is 0.01.

\section{Grad-CAM visualization}
We demonstrate more Grad-CAM visualization of intermediate features in Fig.~\ref{fig:supp_gradcam_O_branch} and Fig.~\ref{fig:supp_gradcam_backbone}. Fig.~\ref{fig:supp_gradcam_O_branch} visualizes the feature maps from the union-box branch, where the baseline attends to the wrong region and our proposed CADG-KLD and CADG-JSD focus on the region related to the human-object interaction. ADG-KLD also cannot give correct saliency maps on the union-box features, which is the same as we point out in the main paper. Fig.~\ref{fig:supp_gradcam_backbone} shows the feature maps from the backbone features, where our proposed methods show more meaningful saliency maps than the baseline. Furthermore, CADG-KLD and CADG-JSD can get even better attention on the possible interactions than ADG-KLD. Therefore, we can safely make the same conclusion as in the main paper that our proposed methods could learn semantically rich features of predicates with strong generalization ability, and the conditional domain generalization approaches can get better saliency on the union-box features.

\begin{figure*}[ht]
\centering
\small
\setlength{\tabcolsep}{1pt}
\begin{tabular}{ccccc}
 \includegraphics[width=.18\textwidth]{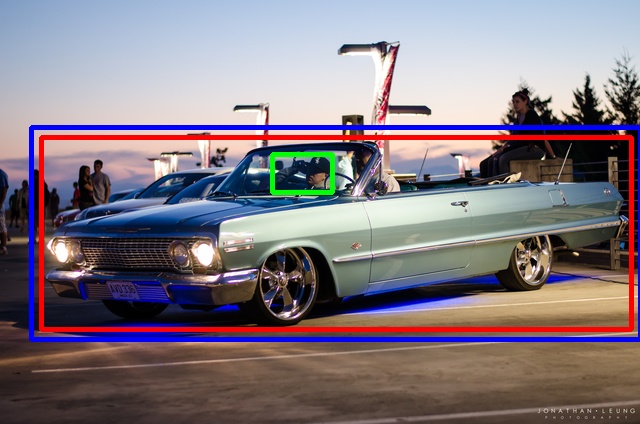}&
 \includegraphics[width=.18\textwidth]{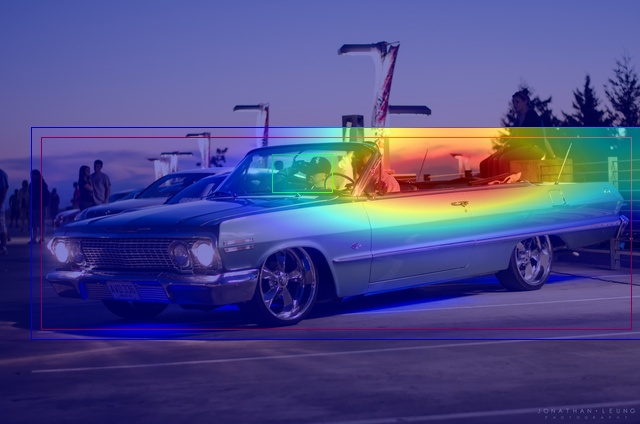}&
 \includegraphics[width=.18\textwidth]{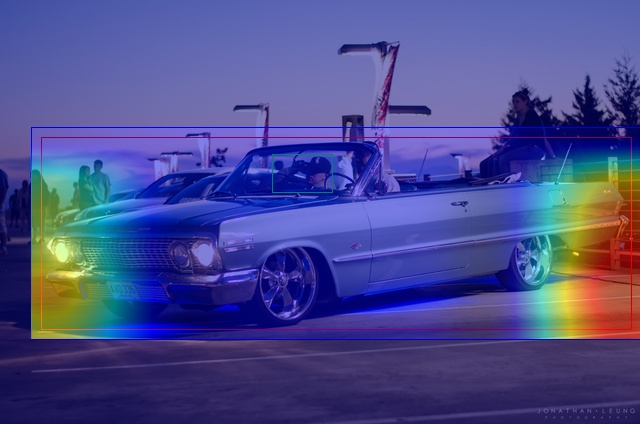}&
 \includegraphics[width=.18\textwidth]{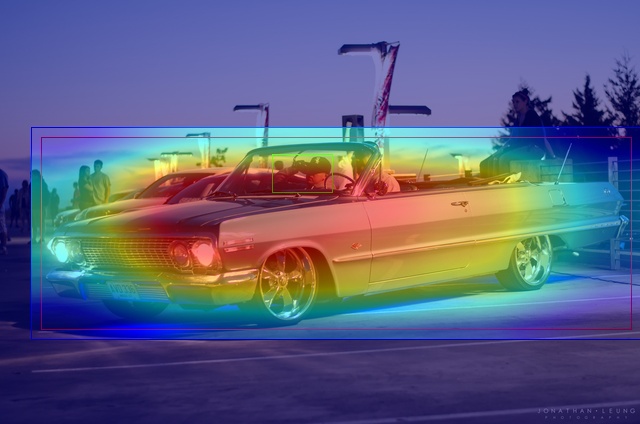}&
 \includegraphics[width=.18\textwidth]{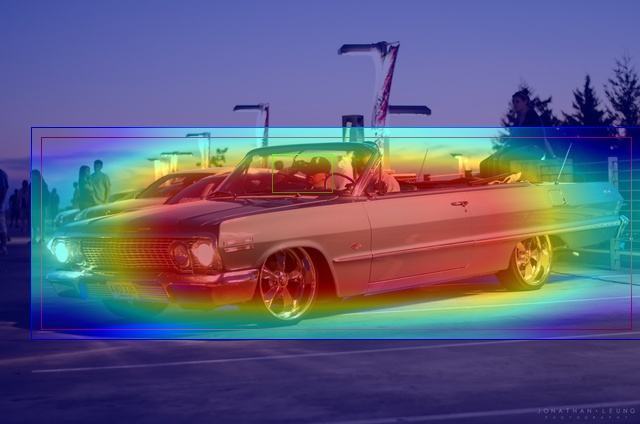}\\
 \includegraphics[width=.18\textwidth]{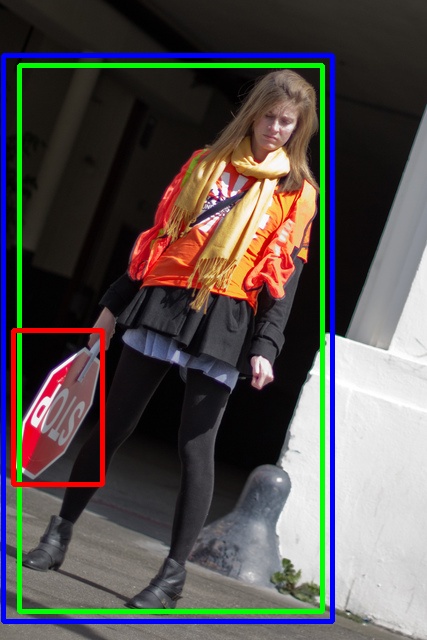}&
 \includegraphics[width=.18\textwidth]{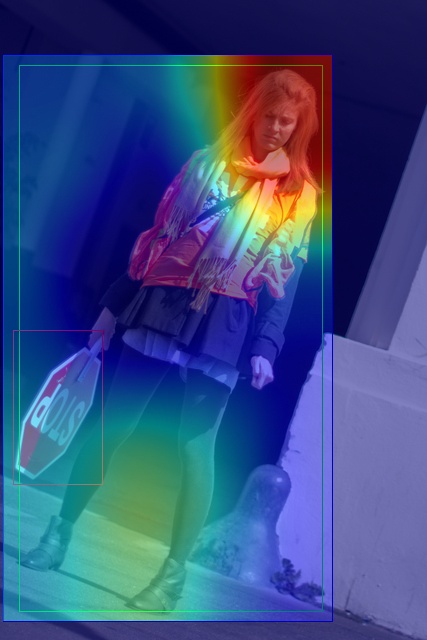}&
 \includegraphics[width=.18\textwidth]{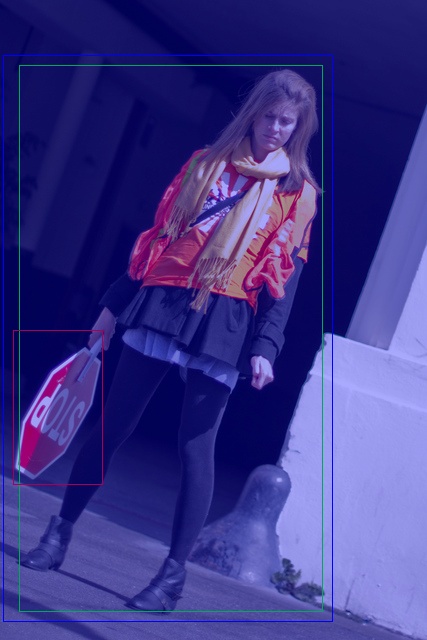}&
 \includegraphics[width=.18\textwidth]{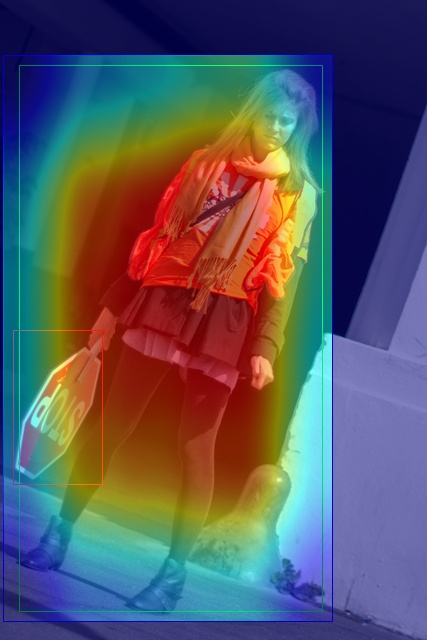}&
 \includegraphics[width=.18\textwidth]{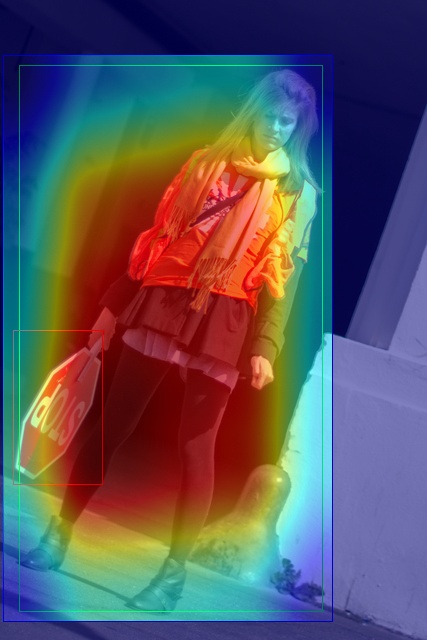}\\
 \includegraphics[width=.18\textwidth]{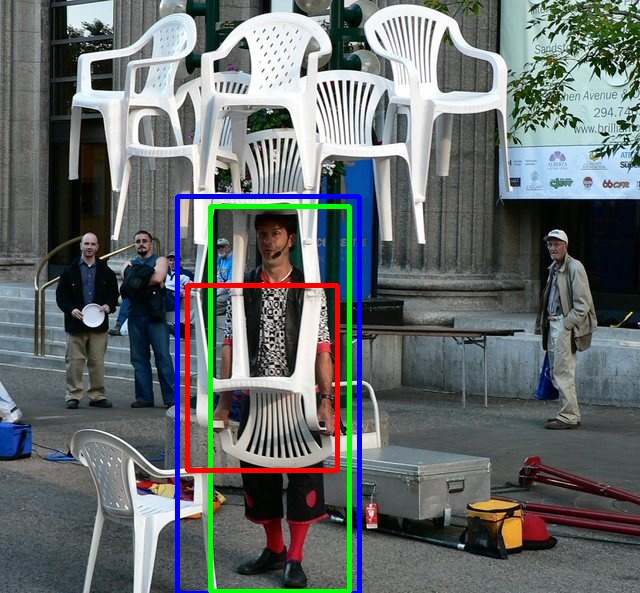}&
 \includegraphics[width=.18\textwidth]{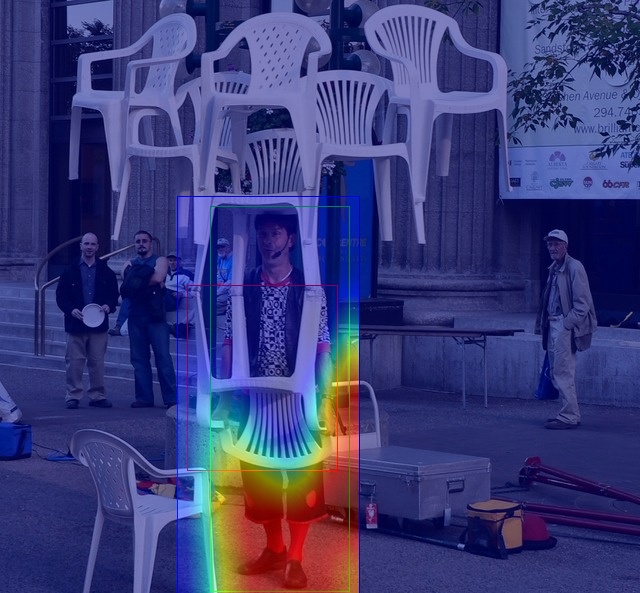}&
 \includegraphics[width=.18\textwidth]{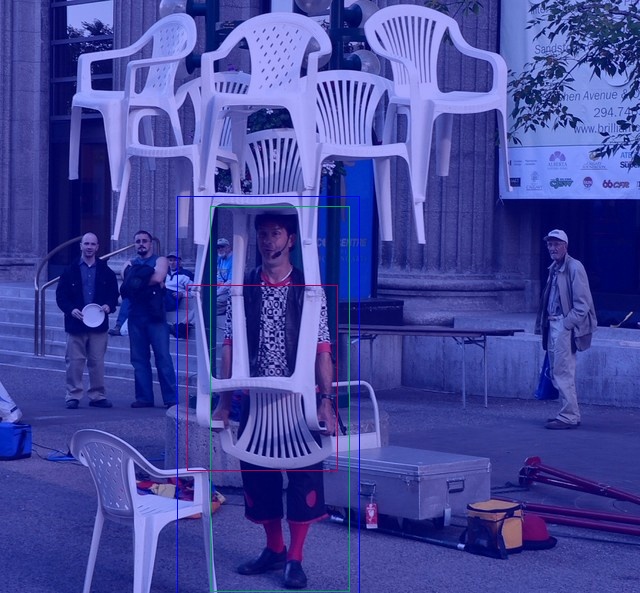}&
 \includegraphics[width=.18\textwidth]{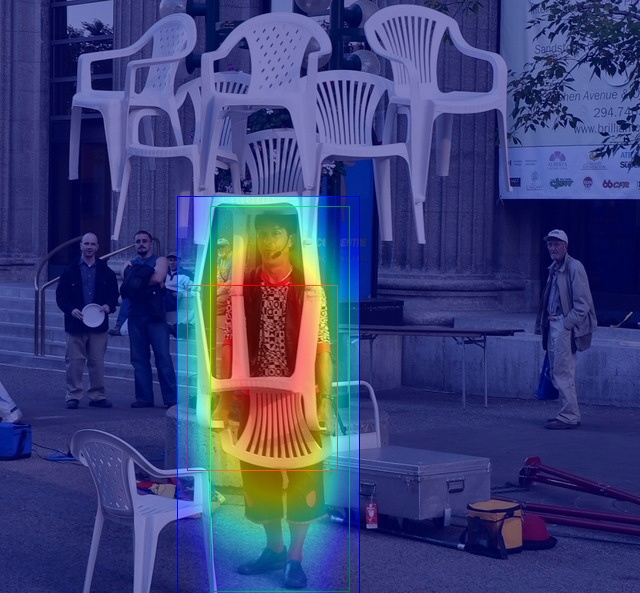}&
 \includegraphics[width=.18\textwidth]{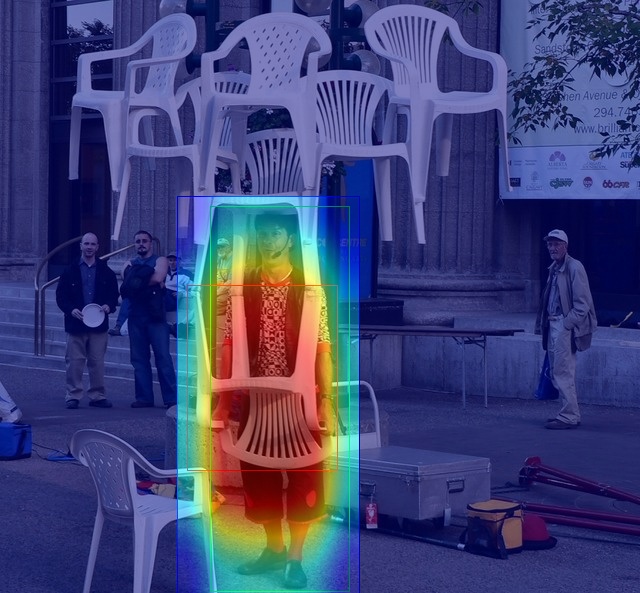}\\
 \includegraphics[width=.18\textwidth]{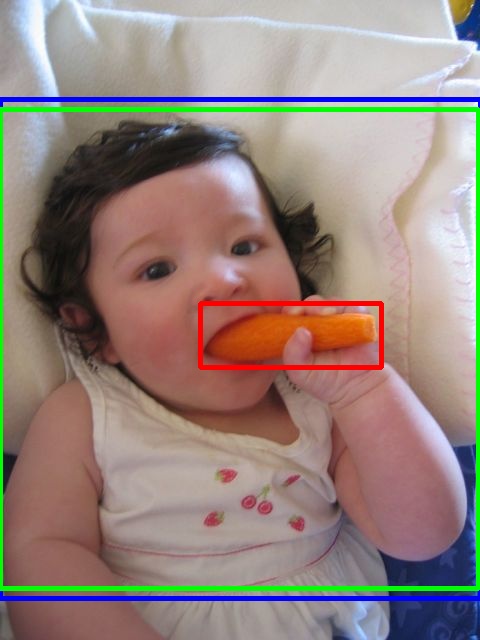}&
 \includegraphics[width=.18\textwidth]{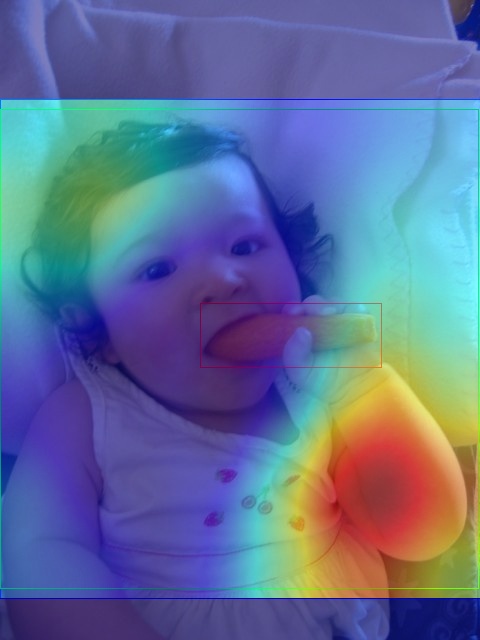}&
 \includegraphics[width=.18\textwidth]{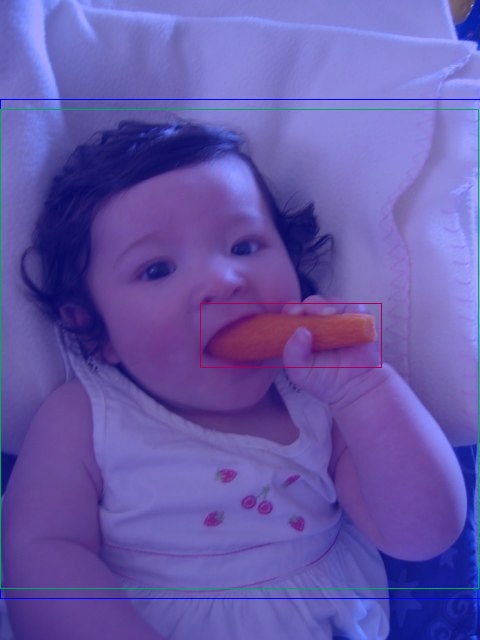}&
 \includegraphics[width=.18\textwidth]{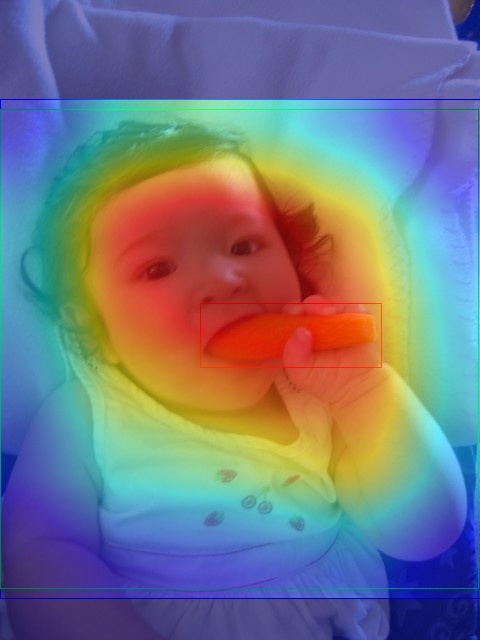}&
 \includegraphics[width=.18\textwidth]{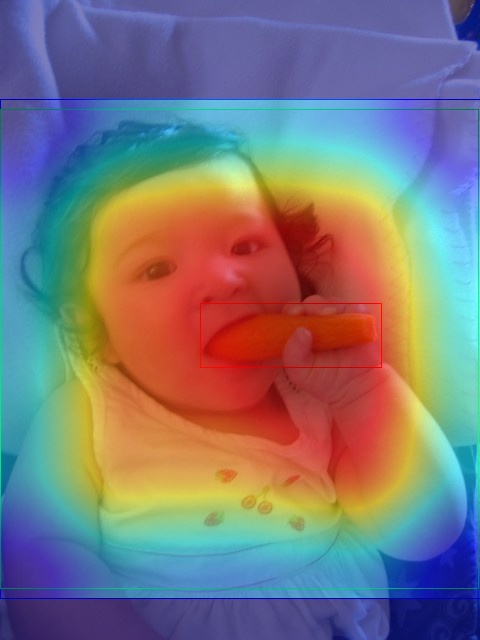}\\
(a) & (b) & (c) & (d) & (e) \\
\end{tabular}
\vspace{-8pt}
\caption{Grad-CAM visualization of the predicates from the \textbf{union-box} features. \textbf{Green box:} human. \textbf{Red box:} object. \textbf{Blue box:} union box of object and human with a margin. (a) input images. (b) baseline. (c) ADG-KLD. (d) CADG-KLD. (e) CADG-JSD. Zoom in for better view.}
\vspace{-5pt}
\label{fig:supp_gradcam_O_branch}
\end{figure*}

\begin{figure*}[ht]
\centering
\small
\setlength{\tabcolsep}{1pt}
\begin{tabular}{ccccc}
 \includegraphics[width=.18\textwidth]{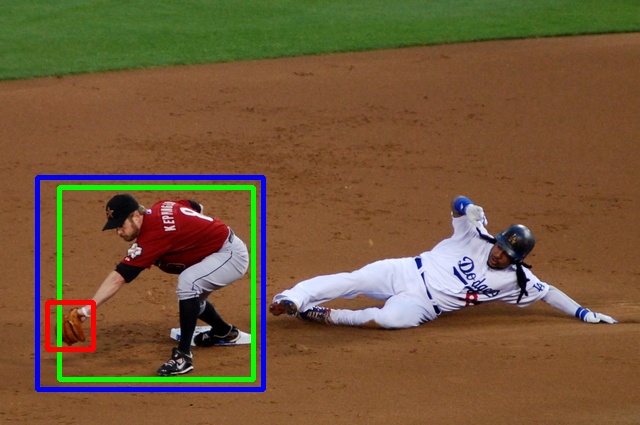}&
 \includegraphics[width=.18\textwidth]{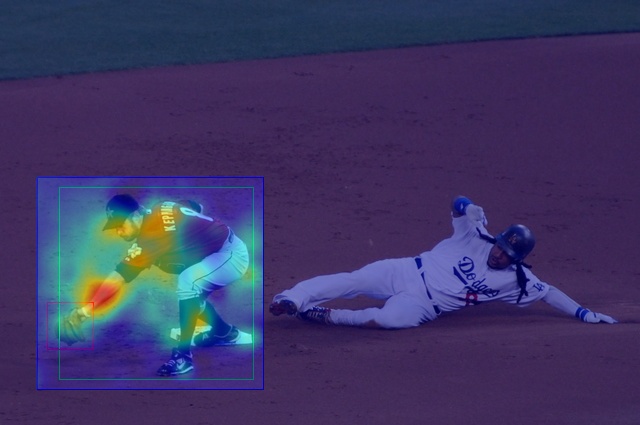}&
 \includegraphics[width=.18\textwidth]{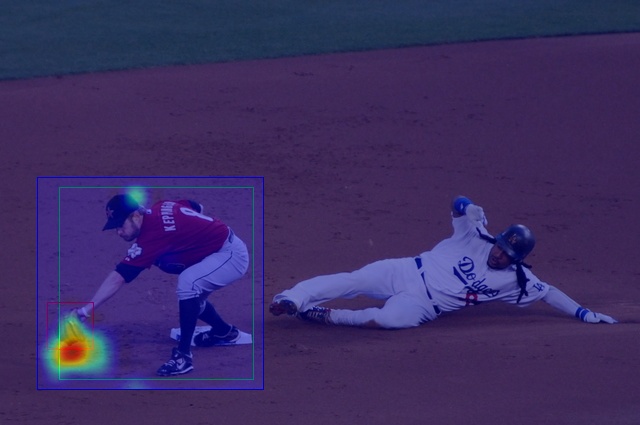}&
 \includegraphics[width=.18\textwidth]{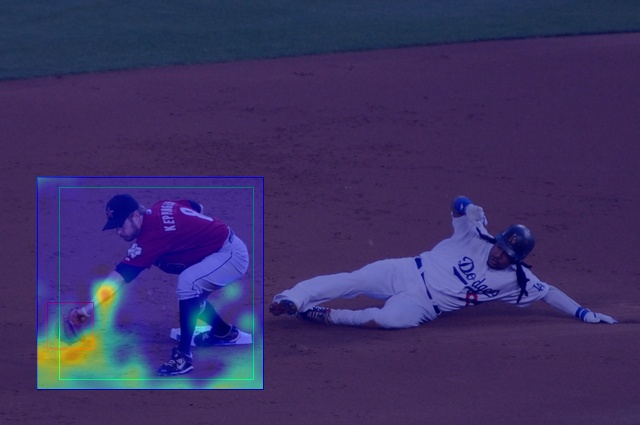}&
 \includegraphics[width=.18\textwidth]{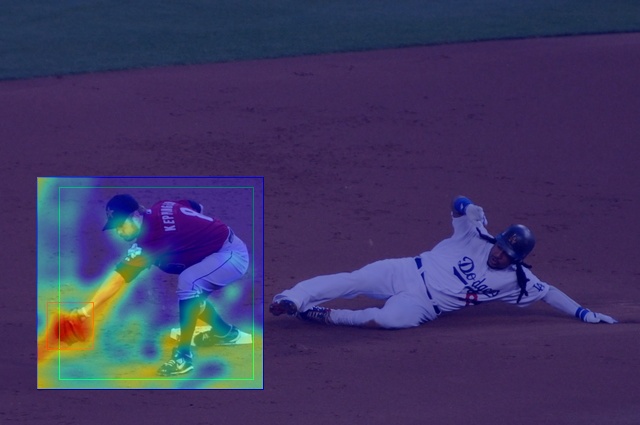}\\
 \includegraphics[width=.18\textwidth]{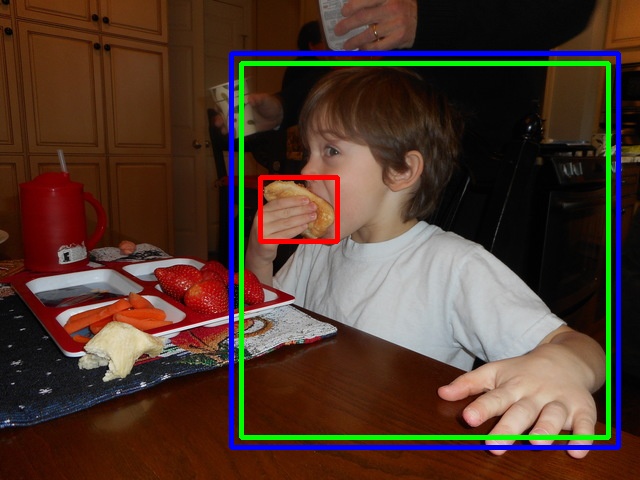}&
 \includegraphics[width=.18\textwidth]{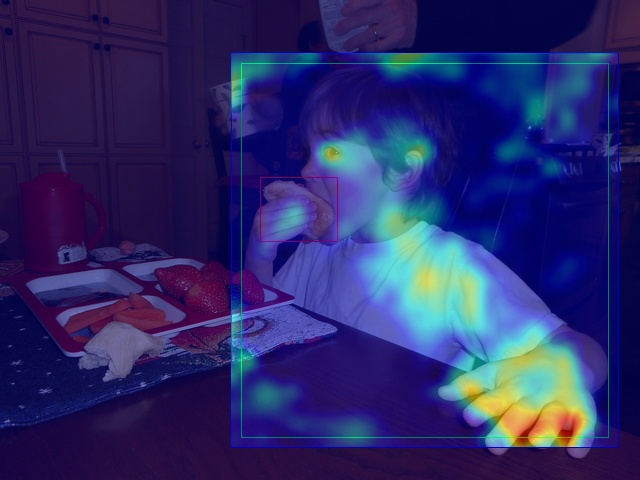}&
 \includegraphics[width=.18\textwidth]{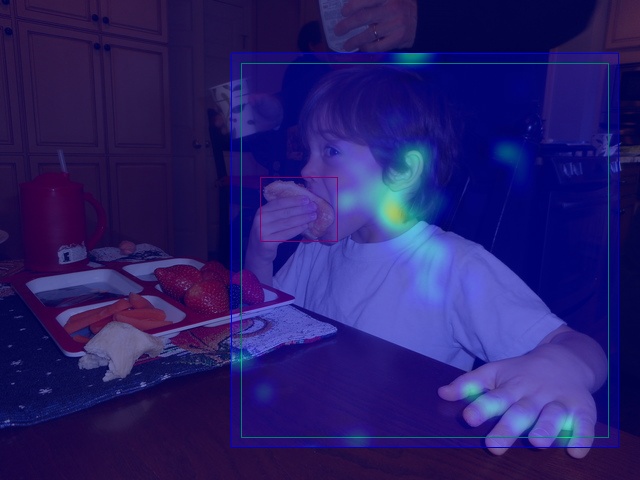}&
 \includegraphics[width=.18\textwidth]{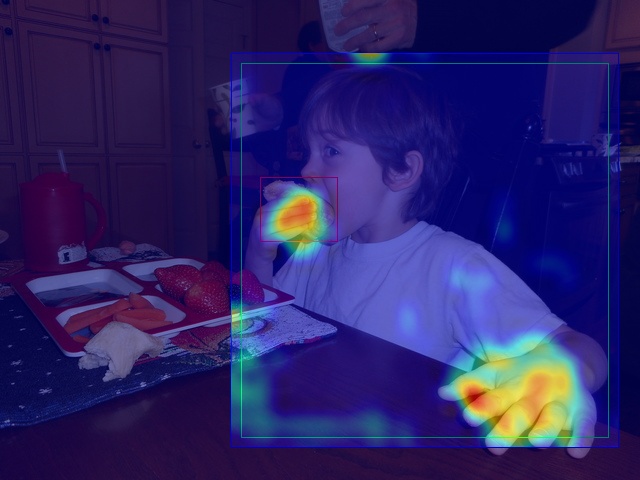}&
 \includegraphics[width=.18\textwidth]{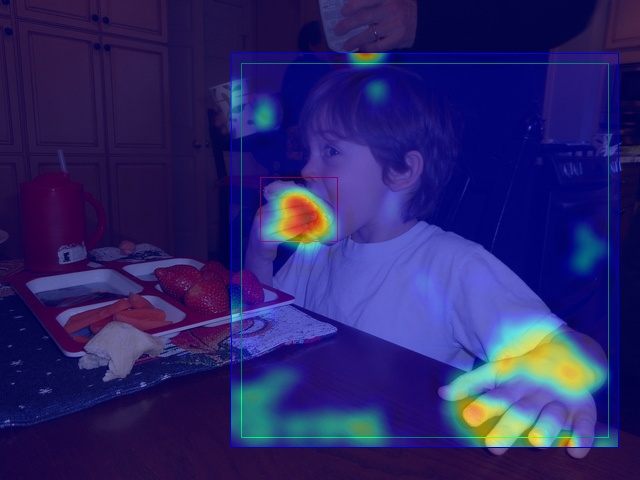}\\
 \includegraphics[width=.18\textwidth]{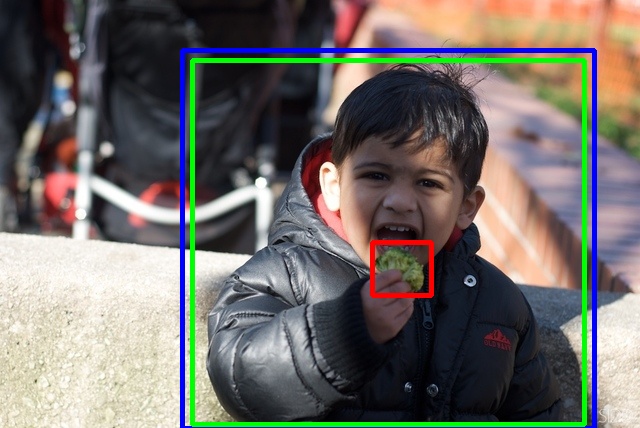}&
 \includegraphics[width=.18\textwidth]{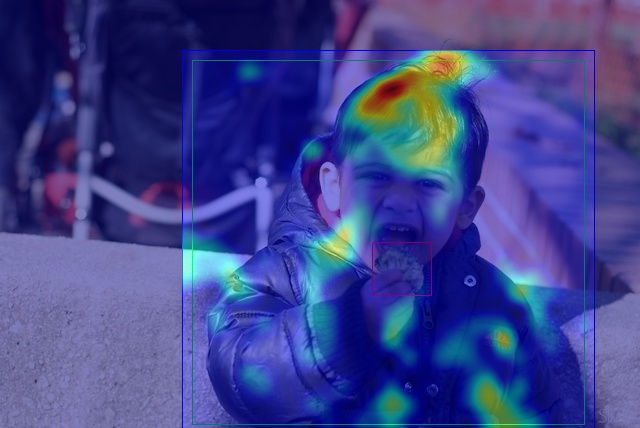}&
 \includegraphics[width=.18\textwidth]{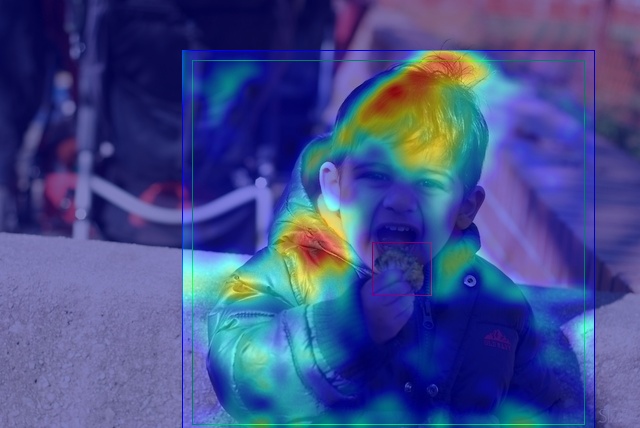}&
 \includegraphics[width=.18\textwidth]{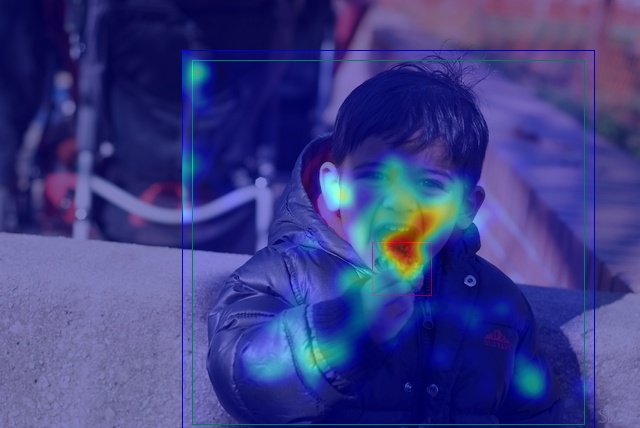}&
 \includegraphics[width=.18\textwidth]{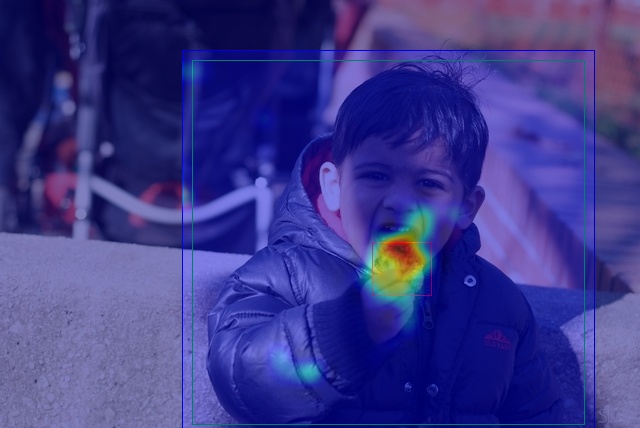}\\
 \includegraphics[width=.18\textwidth]{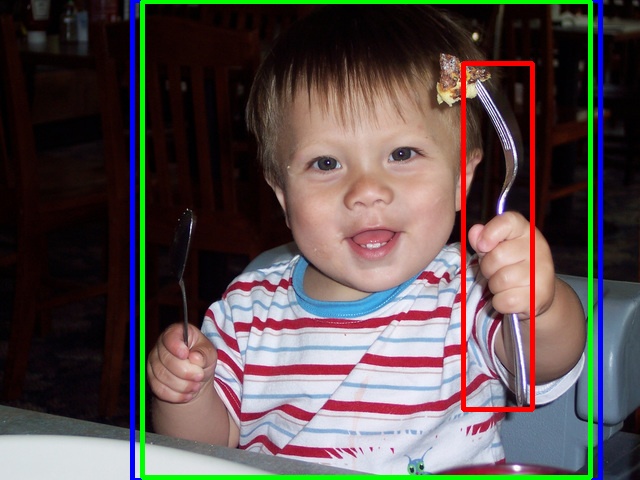}&
 \includegraphics[width=.18\textwidth]{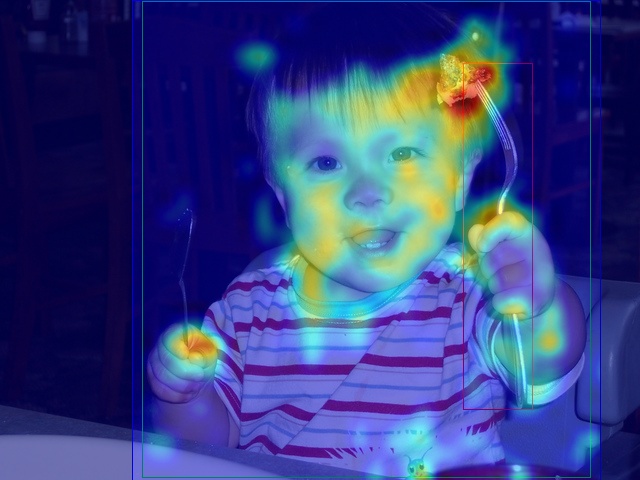}&
 \includegraphics[width=.18\textwidth]{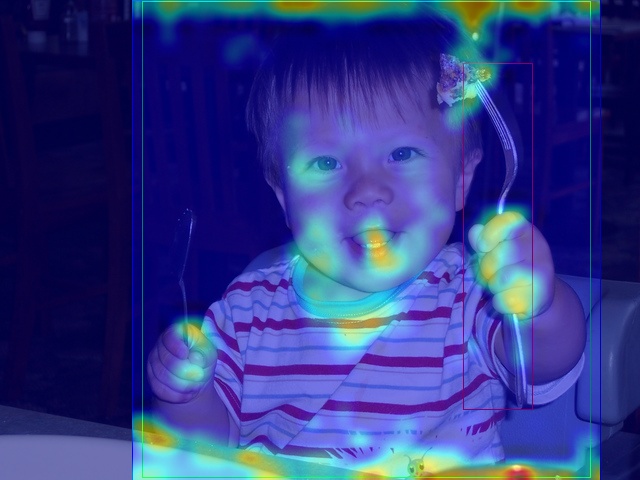}&
 \includegraphics[width=.18\textwidth]{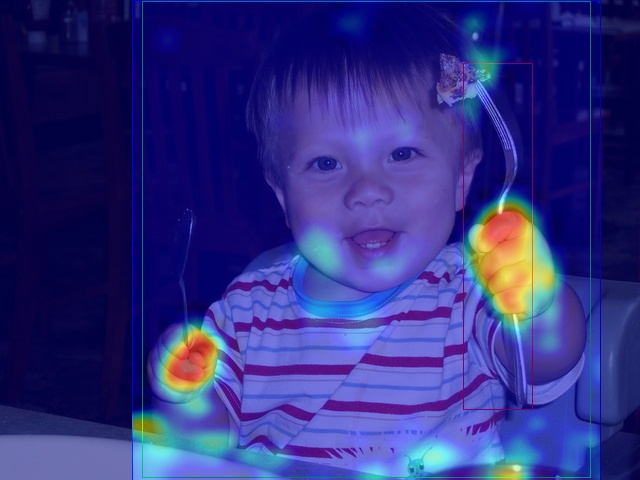}&
 \includegraphics[width=.18\textwidth]{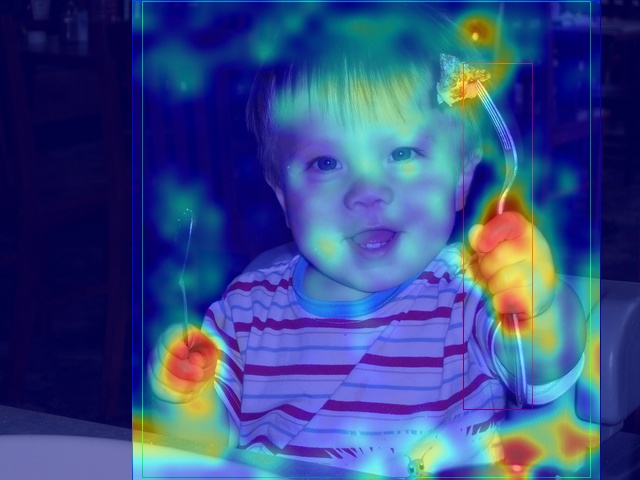}\\
(a) & (b) & (c) & (d) & (e) \\
\end{tabular}
\vspace{-8pt}
\caption{Grad-CAM visualization of the predicates from the \textbf{backbone} features before the ROI Align module (we only keep the visualization inside the union box for simplification). \textbf{Green box:} human. \textbf{Red box:} object. \textbf{Blue box:} union box of object and human with a margin. (a) input images. (b) baseline. (c) ADG-KLD. (d) CADG-KLD. (e) CADG-JSD. Zoom in for better view.}
\vspace{-5pt}
\label{fig:supp_gradcam_backbone}
\end{figure*}

\end{document}